\documentclass{article}

\usepackage[preprint]{neurips_2026}

\usepackage[utf8]{inputenc} %
\usepackage[T1]{fontenc}    %
\usepackage{hyperref}       %
\usepackage{url}            %
\usepackage{booktabs}       %
\usepackage{amsfonts}       %
\usepackage{nicefrac}       %
\usepackage{microtype}      %
\usepackage{xcolor}         %
\usepackage{graphicx}
\usepackage{multirow}
\usepackage{amsmath}
\usepackage{xspace}
\usepackage{svg}
\usepackage{wrapfig}
\usepackage{amssymb}
\usepackage{cleveref}
\usepackage{pifont}
\usepackage{fontawesome5}

\usepackage{amsthm}

\newtheorem{lemma}{Lemma}

\usepackage{enumitem}

\newcommand{\eg}{\hbox{\emph{e.g.,}}\xspace}
\newcommand{\ie}{\hbox{\emph{i.e.,}}\xspace}

\newcommand{\ours}{Preference Delta Aggregation\xspace}
\newcommand{\shortours}{PDA\xspace}
\newcommand{\ourmerging}{Geometric Alignment Merging\xspace}
\newcommand{\shortmerging}{GAM\xspace}
\newcommand{\tulu}{Tülu3\xspace}

\title{From ``Weak'' Signals to Strong Models: Preference Delta Aggregation with LoRA Merging}

\author{%
  Qi Sun$^{1}$\thanks{Equal contribution.} \quad
  Siyue Zhang$^{2}$\footnotemark[1] \quad
  Yulin Chen$^3$ \quad
  Yuxiang Xue$^1$ \quad
  Ru Peng$^4$ \quad
  Chen Zhao$^1$ \\
  \\
  $^1$NYU Shanghai \quad $^2$NTU \quad $^3$NYU \quad $^4$ZJU \\
  \\
  \texttt{\{qs2196, yc7320, yx3044, cz1285\}@nyu.edu} \\
  \texttt{siyue001@e.ntu.edu.sg} \quad \texttt{rupeng@zju.edu.cn} \\
  \\
  \vspace{-2mm} 
  \href{https://github.com/AlbertQiSun/Preference-Delta-Aggregation}{\faGithub\ \texttt{AlbertQiSun/Preference-Delta-Aggregation}}
}

\begin{document}

\maketitle

\begin{abstract}
    
    Training strong large language models (LLMs) requires high-quality supervision, which is often scarce.
    Recent work shows that paired preference data from weak–weaker model pairs (\eg Qwen3 4B over 1.7B), despite the limited quality of individual responses, can provide an effective supervision signal through relative quality deltas, which we term a ``weak'' signal.
    This motivates a key research question: can multiple ``weak'' signals be constructively aggregated for improving strong models (\eg Qwen3 8B)?
    To this end, we propose \ours\ (\shortours), the first framework that derives a preference delta from each weak-weaker model pair, instantiates it as a LoRA adapter learned through preference optimization, and aggregates the resulting deltas via LoRA merging.
To further mitigate directional interference during LoRA merging, we introduce \ourmerging (\shortmerging), a geometry-aware merging method that aligns adapter subspaces before aggregation, enabling more robust composition of diverse deltas.
    Evaluations on knowledge reasoning and agentic search benchmarks show that aggregating multiple ``weak'' signals pushes performance beyond any single signal, with further gains as additional signals are incorporated.
    Correspondingly, \shortours\ with \shortmerging improves the strong model by 6.8 and 7.3 points on average for knowledge reasoning and agentic search, respectively. It outperforms all single-delta and multi-delta baselines, exceeding the best single-delta baseline by 2.1 and 4.3 points.
    Further analysis attributes these gains to the effective composition of complementary capabilities encoded across distinct preference deltas.\footnote{Code and data will be released after the review period.}

\end{abstract}

\section{Introduction} \label{sec:introduction}

High-quality data is widely recognized as a key ingredient in building strong large language models (LLMs). Accordingly, substantial prior work has focused on curating training data throughout the pipeline, spanning pretraining \citep{datacomp,fineweb,olmo} and post-training \citep{nemotron,toolmind,mdr}. However, many desirable tasks remain difficult to supervise effectively, either due to prohibitive annotation costs or because they exceed human expertise. This challenge has motivated growing interest in leveraging low-quality data to expand the frontier of LLM capabilities \citep{weaktostrong,varying}. Building on this direction, recent work introduces the Delta Learning Hypothesis \citep{geng2025deltalearninghypothesispreference}, which posits that relative quality differences between weak responses (\eg Qwen3 4B over\ 1.7B) can serve as effective ``weak'' supervision signals for improving strong models (\eg Qwen3-8B) through preference tuning.

The success of the Delta Learning Hypothesis naturally raises a key question: can multiple such ``weak'' signals be constructively aggregated to yield further gains? To investigate this, we first examine training-based aggregation strategies, including sequential preference optimization over multiple datasets and joint optimization on their mixture. However, sequential training suffers from severe catastrophic forgetting \citep{catas,mapping}, while joint training fails to outperform the best individual dataset due to gradient conflicts \citep{conflicts}. These results suggest that naive training-based aggregation does not effectively combine multiple ``weak'' signals.

Therefore, we propose the \ours\ framework (\shortours), which independently preference-tunes the strong student model on datasets generated by different weak-weaker model pairs using parameter-efficient training \citep{hu2021loralowrankadaptationlarge}, as illustrated in \Cref{fig:figure1}. Each resulting LoRA adapter captures an improvement direction induced by the quality deltas in its preference data, which we term a \emph{preference delta}. We then aggregate multiple preference deltas through LoRA merging techniques \citep{yadav2023tiesmergingresolvinginterferencemerging,stoica2024modelmergingsvdtie,lorahub}. However, conventional weight averaging ignores the geometry of low-rank update subspaces, often leading to directional interference among misaligned preference deltas. To address this issue, we introduce a novel LoRA merging method, \ourmerging\ (\shortmerging), which decomposes adapters into structured low-rank components and aligns their subspaces before aggregation, enabling more robust composition of diverse preference deltas.

Following prior work \citep{geng2025deltalearninghypothesispreference}, we evaluate the proposed \shortours and \shortmerging on knowledge reasoning tasks, and further extend the evaluation to more challenging agentic search settings requiring both reasoning and retrieval \citep{diffusion,mrmr}. We construct preference datasets from GSM8K \citep{cobbe2021trainingverifierssolvemath} and MuSiQue \citep{trivedi2022musiquemultihopquestionssinglehop} using multiple weak-weaker model pairs drawn from diverse model families, including Llama-3.2 \citep{grattafiori2024llama3herdmodels}, Qwen3 \citep{yang2025qwen3technicalreport}, DeepSeek-R1 \citep{deepseek}, AceSearcher \citep{xu2025acesearcherbootstrappingreasoningsearch}, and Search-R1 \citep{jin2025searchr1trainingllmsreason}. We evaluate strong student models, including Qwen3-8B and \tulu-8B \citep{lambert2025tulu3pushingfrontiers}, across diverse reasoning benchmarks as well as both single-hop and multi-hop search QA benchmarks, and compare against four categories of baselines: the original student models without fine-tuning, single-delta preference tuning, training-based multi-delta aggregation methods (\eg sequential and joint training), and existing LoRA merging approaches such as naive averaging \citep{ilharco2023editingmodelstaskarithmetic}, TIES-Merging \citep{yadav2023tiesmergingresolvinginterferencemerging}, and KnOTS \citep{stoica2024modelmergingsvdtie}.

\begin{figure}
    \centering
    \includegraphics[width=1\linewidth]{figures/Frame_1.png}
    \caption{\textbf{(a)} Preference Delta Aggregation (PDA) independently preference-tunes a strong student model on preference datasets from different weak-weaker model pairs using LoRA fine-tuning, then merges the resulting adapters. \textbf{(b)} In parameter space, each adapter induces a distinct update direction, and PDA aggregates these deltas to compose complementary improvements, moving the student toward a better solution. \textbf{(c)} By aggregating multiple preference deltas, PDA achieves larger gains than baselines trained on preference data from any single weak-weaker model pair.}
    \label{fig:figure1}
\end{figure}

Results show that \shortours-\shortmerging improves Qwen3-8B by an average of 6.8 and 7.3 points on knowledge reasoning and agentic search benchmarks, respectively, using only low-quality preference data generated from weaker models. \shortours-\shortmerging outperforms all single-delta and multi-delta baselines, including gains of 2.1 and 4.3 points over the strongest single-delta baseline on average. Similar improvements are consistently observed on \tulu-8B, demonstrating strong generalization across different student models. Notably, the gains continue to increase as additional preference deltas are incorporated, highlighting the benefit of aggregating diverse ``weak'' signals. Our analysis further suggests that these improvements stem from composing complementary capabilities encoded across distinct preference deltas. Our contributions can be summarized as follows:
\begin{enumerate}[
label=\textbf{(\arabic*)},
leftmargin=0pt,
itemindent=2.2em,
labelsep=0em,
align=left,
itemsep=4pt,
topsep=2pt,
parsep=0pt
]
\item \textbf{A framework for improving strong models by aggregating ``weak'' preference signals.}
We propose \shortours, the first framework that improves strong models without relying on high-quality data by leveraging paired preference data from multiple weak-weaker model pairs and aggregating the resulting signals through LoRA merging.

\item \textbf{A LoRA merging method that mitigates directional interference among preference deltas.}
We introduce \shortmerging, a geometry-aware LoRA merging method that decomposes adapters via SVD, aligns their low-rank subspaces on the Grassmannian manifold, averages the aligned singular components, and reconstructs the merged adapter.

\item \textbf{Strong empirical results on reasoning and agentic search tasks.}
Extensive experiments on knowledge reasoning and agentic search benchmarks show that \shortours-\shortmerging constructively aggregates multiple ``weak'' signals, consistently outperforming all single-delta and multi-delta baselines.
\end{enumerate}

\section{Related Work} \label{sec:related-work}

\paragraph{Weak-to-strong Generalization.}

Weak-to-strong generalization studies whether supervision from a weaker model can effectively improve a stronger one without collapsing to the weak supervisor’s performance ceiling \citep{burns2023weaktostronggeneralizationelicitingstrong,ye2026weaktostrong}. Prior studies largely center on two directions: uncovering capabilities already present in pretrained base models \citep{unreasonable}, and using models to recursively refine the data used for subsequent training \citep{meta}. In both settings, supervision is commonly delivered as pointwise judgments from a single weak teacher. However, recent work on AI feedback suggests that such absolute supervision can propagate the teacher’s systematic errors and hallucination biases \citep{zheng2023judgingllmasajudgemtbenchchatbot,li2024hrlaifimprovementshelpfulnessharmlessness}. This motivates relative supervision signals based on quality differences rather than absolute judgments \citep{weaktostrong,varying,geng2025deltalearninghypothesispreference}

\paragraph{Delta Learning.}
Existing alignment approaches rely heavily on human annotations or supervision from frontier models through frameworks such as RLHF or DPO \citep{ouyang2022traininglanguagemodelsfollow, bai2022traininghelpfulharmlessassistant, rafailov2024directpreferenceoptimizationlanguage}. 
Delta learning relaxes this costly supervision requirement by constructing preference pairs from a weak model and its weaker variant, using the quality gap between their responses as an optimization signal for a stronger student \citep{geng2025deltalearninghypothesispreference}. 
While this paradigm shares conceptual similarities with synthetic preference generation \citep{yuan2025selfrewardinglanguagemodels} and has shown promise on knowledge reasoning tasks, its applicability to complex multi-turn environments such as agentic search—where models must iteratively interact with external tools \citep{yao2023reactsynergizingreasoningacting}—remains underexplored. In this work, we first validate the Delta Learning Hypothesis on more challenging agentic search tasks, and then investigate whether aggregating multiple deltas can yield greater gains than any single delta, which we address with \ours.

\paragraph{LoRA Tuning and Merging.}
Low-Rank Adaptation (LoRA) \citep{hu2021loralowrankadaptationlarge} enables parameter-efficient adaptation of large language models by introducing lightweight low-rank modules, avoiding full-model fine-tuning for each downstream task. Its modular structure naturally supports post-hoc composition, leading to increasing interest in combining specialized capabilities learned from different tasks or domains \citep{jin2025datalessknowledgefusionmerging}. 
A common line of work treats adapters as parameter updates that can be algebraically combined. Task Arithmetic \citep{ilharco2023editingmodelstaskarithmetic} models each update as a task vector and composes them through weighted addition. Follow-up methods such as TIES-merging \citep{yadav2023tiesmergingresolvinginterferencemerging} and DARE \citep{yu2024languagemodelssupermario} attempt to reduce interference by sparsifying updates and resolving sign conflicts. KnOTS \citep{stoica2024modelmergingsvdtie} further introduces singular value decomposition (SVD) to align adapter subspaces prior to merging. However, the linear averaging can substantially weaken useful signals when aggregating diverse preference directions within the same task. Thus, we propose \ourmerging, which separates magnitude from direction by decomposing adapters via SVD, aligning subspaces with orthogonal Procrustes, and independently composing directional bases and intrinsic magnitudes, enabling robust aggregation of diverse and conflicting signals.

\section{\ours}

To aggregate multiple ``weak'' supervision signals, we propose \textbf{\ours\ (\shortours)}, a two-stage framework that first derives a preference delta from each weak-weaker model pair and then aggregates the resulting LoRA-instantiated updates through parameter-space merging.

\paragraph{Deriving Preference Deltas.}
Training on paired responses enables models to learn from relative quality differences \citep{geng2025deltalearninghypothesispreference}. Following this principle, we construct \(n\) preference datasets from \(n\) model families. For the \(i\)-th family, we select a weak model \(\theta_{w_i}\) and a weaker (\ie smaller) model \(\theta_{w'_i}\). Given task queries, the two models generate paired responses, where the response from the stronger weak model \(\theta_{w_i}\) is automatically treated as the chosen sample. This yields a preference dataset \(\mathcal{D}_i\) consisting of labeled response pairs.

Each preference dataset \(\mathcal{D}_i\) can be used to improve the strong student model \(\theta_s\) through preference optimization, using objectives such as DPO \citep{rafailov2024directpreferenceoptimizationlanguage} or ORPO \citep{hong2024orpomonolithicpreferenceoptimization}. We refer to the update learned from \(\mathcal{D}_i\) as a \textit{preference delta}, denoted by \(\Delta_i\). Intuitively, each \(\Delta_i\) captures a transferable improvement direction induced by the relative quality difference encoded in preference data generated by the corresponding weak-weaker model pair.

\paragraph{Formulating Delta Aggregation.}
Given preference deltas \(\{\Delta_i\}_{i=1}^{n}\), our goal is to effectively aggregate these diverse signals to improve the strong student model $\theta_s$:
\[
\mathcal{A}(\{\Delta_i\}_{i=1}^{n};\theta_s)=\theta_s',
\]
where \(\mathcal{A}\) denotes an aggregation operator and \(\theta_s'\) is the resulting model. A straightforward way to aggregate these signals is through training-based strategies. For example, one may jointly train on the union of all preference datasets \(\bigcup_{i=1}^{n}\mathcal{D}_i\), or sequentially optimize over them \((\mathcal{D}_1 \rightarrow \mathcal{D}_2 \rightarrow \dots)\). However, such strategies can suffer from gradient conflicts~\citep{conflicts} and catastrophic forgetting~\citep{catas} during training.

\paragraph{Aggregating Deltas via LoRA Merging.}
\shortours instantiates each preference delta as an independently trained LoRA adapter. 
For each preference dataset \(\mathcal{D}_i\), we fine-tune the same base student model \(\theta_s\) with LoRA, obtaining an adapter \(\Delta W_i\) that realizes the abstract preference delta \(\Delta_i\) in parameter space. Loading this adapter into the base model gives
\[
\theta_s^{(i)} = \theta_s \oplus \Delta W_i,
\]
where \(\oplus\) denotes applying the LoRA adapter to the target modules of \(\theta_s\). The learned adapters \(\{\Delta W_i\}_{i=1}^{n}\) are then merged to produce the final model \(\theta_s'\).

For a given adapted weight matrix, each LoRA adapter represents a low-rank update
\[
\Delta W_i = B_iA_i \in \mathbb{R}^{d \times k},
\]
where \(k\) and \(d\) denote the input and output dimensions, 
\(B_i \in \mathbb{R}^{d \times r}\), \(A_i \in \mathbb{R}^{r \times k}\), and \(r \ll \min(d,k)\). 
A standard post-hoc merging baseline averages the expanded updates:
\[
\Delta W_{\mathrm{AVG}}=\frac{1}{n}\sum_{i=1}^{n} B_iA_i,
\qquad
\theta_s' = \theta_s \oplus \Delta W_{\mathrm{AVG}}.
\]

However, naive averaging in Euclidean parameter space can suffer from destructive interference during model merging, including redundant updates and sign conflicts across adapters \citep{yadav2023tiesmergingresolvinginterferencemerging}. Moreover, independently trained LoRA adapters may exhibit rotationally misaligned low-rank subspaces, making direct averaging geometrically inconsistent. This motivates our geometry-aware LoRA merging method, which explicitly aligns low-rank subspaces before aggregation.

\section{\ourmerging} \label{sec:gam}

To address directional conflicts in LoRA merging, we further propose \textbf{\ourmerging\ (\shortmerging)}, which decomposes adapters via SVD, aligns their subspaces on the Grassmannian manifold, independently averages directional bases and magnitudes, and reconstruct the merged adapter.

\paragraph{Step 1: Decomposition.}
Given the low-rank factors $B_i$ and $A_i$ of each trained LoRA adapter, we first instantiate the full weight update matrix $\Delta W_i \in \mathbb{R}^{d \times k}$. We then compute its exact rank-$r$ thin singular value decomposition (SVD):
\begin{equation}
    \Delta W_i = U_i S_i V_i^\top
\end{equation}
where $U_i \in \mathbb{R}^{d \times r}$ and $V_i \in \mathbb{R}^{k \times r}$ have strictly orthonormal columns, and $S_i \in \mathbb{R}^{r \times r}$ is a diagonal matrix containing the $r$ singular values.

\paragraph{Step 2: Subspace Alignment on the Grassmannian Manifold.}
The orthonormal bases $U_i$ and $V_i$ define $r$-dimensional low-rank subspaces. Importantly, the same subspace can be represented by many different orthonormal bases related through rotation. As a result, independently trained adapters may learn similar underlying subspaces while adopting rotationally misaligned internal coordinate systems. Directly averaging the basis vectors $U_i$ and $V_i$ can therefore lead to basis mismatch and destructive geometric interference. Geometrically, such subspaces can be viewed as points on the Grassmannian manifold \citep{edelman1998geometryalgorithmsorthogonalityconstraints}.

To resolve this issue, we align all adapters to a reference adapter $\Delta W_1$\footnote{We align all adapters to a reference adapter chosen as the one with the highest validation performance. Ablations in Appendix~\ref{app:reference_sensitivity} show that performance is insensitive to this choice.} using orthogonal Procrustes alignment \citep{hurleyorthonal}:
\begin{equation}
    R^U_i = \arg\min_{R \in O(r)} \|U_i R - U_1\|_F, \qquad
    R^V_i = \arg\min_{R \in O(r)} \|V_i R - V_1\|_F.
\end{equation}

The resulting rotations synchronize the basis orientations across adapters while preserving the underlying subspaces.\footnote{Appendix~\ref{app:alignment} analyzes the learned rotation matrices $R$, showing that independently trained adapters often learn similar subspaces with highly misaligned bases.} We apply unweighted alignment on the truncated bases so that secondary preference directions are not dominated by the largest principal components. The rotations are then absorbed into the singular components to preserve the original adapter output:
\begin{equation} \label{eq:aligned_factors}
    \tilde{U}_i = U_i R^U_i, \qquad
    \tilde{S}_i = (R^U_i)^\top S_i R^V_i, \qquad
    \tilde{V}_i = V_i R^V_i.
\end{equation}

\paragraph{Step 3: Averaging and Reconstruction.}
Once aligned, the components are averaged independently:
\begin{equation}
    \bar{U} = \frac{1}{n} \sum_{i=1}^n \tilde{U}_i, \quad \bar{S} = \frac{1}{n} \sum_{i=1}^n \tilde{S}_i, \quad \text{and} \quad \bar{V} = \frac{1}{n} \sum_{i=1}^n \tilde{V}_i.
\end{equation}
Finally, the averaged components are multiplied to reconstruct the final merged weight update: 
\begin{equation}
    \Delta W_{\text{GAM}} = \bar{U} \, \bar{S} \, \bar{V}^\top
\end{equation}
This unified dense update $\Delta W_{\text{GAM}} \in \mathbb{R}^{d \times k}$ captures the aggregated preference delta and can be directly added to the pre-trained weights for downstream deployment and evaluation. A theoretical analysis comparing \shortmerging with naive averaging is provided in Appendix~\ref{app:gam_math}.

\section{Experiments} \label{sec:experiments}

In this section, we evaluate the proposed \ours framework on two key tasks: knowledge reasoning and agentic search. We first outline the experimental setups for both tasks (\Cref{sec:reasoning_exp,sec:agentic_exp}), then detail the baselines used for evaluation (\Cref{sec:baselines}), and finally present the main empirical results and corresponding analysis (\Cref{sec:main_results}).

\subsection{Knowledge Reasoning Task Setup} \label{sec:reasoning_exp}

\paragraph{Tasks and Data.} We evaluate \ours on the knowledge reasoning task using four widely used benchmarks: MATH \citep{lightman2023letsverifystepstep}, GPQA \citep{rein2023gpqagraduatelevelgoogleproofqa}, MMLU \citep{hendrycks2021measuringmassivemultitasklanguage}, and GSM8K \citep{cobbe2021trainingverifierssolvemath}. To construct the preference data, we use the full GSM8K training set, consisting of 7,473 queries. For each query, we generate one pair of response trajectories for every weak–weaker model pair. Preference labels are automatically assigned based on model size, with responses from larger (weak) models treated as preferred over those from smaller (weaker) models ($y_{\text{weak}} \succ y_{\text{weaker}}$).

\paragraph{Model Pairs and Preference Deltas.} We use Qwen3-8B \citep{yang2025qwen3technicalreport} as the primary strong student model, and Tülu3-8B \citep{lambert2025tulu3pushingfrontiers} to evaluate cross-model generalization.
The preference deltas are derived from three pairs of weak and weaker models: Llama-3.2 3B over 1B \citep{grattafiori2024llama3herdmodels}, Qwen3 4B over 1.7B, and DeepSeek-R1 7B over 1.5B \citep{deepseek}. As detailed in Appendix \ref{app:abs_perf}, all selected weak and weaker LLMs consistently underperform the student model, thereby ensuring a setting with limited high-quality supervision.

\paragraph{Training and Merging Details.} For preference tuning, we employ Direct Preference Optimization (DPO) \citep{rafailov2024directpreferenceoptimizationlanguage} with LoRA adapters \citep{hu2021loralowrankadaptationlarge} for each preference pair. For our proposed \shortmerging, we extract the singular subspaces of the learned adapters via SVD and align them using the Procrustes method (detailed in \Cref{sec:gam}). Comprehensive hyperparameters for DPO training, as well as the parameter-sweeping configurations for the TIES-Merging baseline (e.g., the density retention fraction $p$), are provided in Appendix~\ref{app:training_details}.

\subsection{Agentic Search Task Setup} \label{sec:agentic_exp}

\paragraph{Tasks and Data.} We evaluate \ours on agentic search using three single-hop QA benchmarks (NQ \citep{kwiatkowski-etal-2019-natural}, TriviaQA \citep{joshi2017triviaqalargescaledistantly}, PopQA \citep{mallen2023trustlanguagemodelsinvestigating}) and four multi-hop QA benchmarks (HotpotQA \citep{yang2018hotpotqadatasetdiverseexplainable}, 2WikiMultiHopQA \citep{ho2020constructingmultihopqadataset}, Bamboogle \citep{press2023measuringnarrowingcompositionalitygap}, MuSiQue \citep{trivedi2022musiquemultihopquestionssinglehop}). Following the same process described in \Cref{sec:reasoning_exp}, the preference data is constructed using weak-weaker model pairs and 8,000 queries randomly sampled from the MuSiQue training set. Unlike the reasoning setup, agentic search trajectories require strict structural formatting. We apply a formatting filter only to the weak model’s (chosen) responses, while retaining formatting errors in the weaker model’s (rejected) responses to preserve the negative signal.

 \paragraph{Model Pairs and Preference Deltas.} We use the same student models as in \Cref{sec:reasoning_exp}, namely Qwen3-8B and Tülu3-8B. We construct preference deltas from four LLM families: Llama-3.2 3B over 1B, AceSearcher 7B over 3B \citep{xu2025acesearcherbootstrappingreasoningsearch}, Qwen3 4B over 1.7B, and Search-R1 7B over 3B \citep{jin2025searchr1trainingllmsreason}. Llama 3.2 and Qwen3 are prompted using ReAct \citep{yao2023reactsynergizingreasoningacting} and one example to operate as search agents, whereas AceSearcher and Search-R1 are fine-tuned for this role. Similar to the model selection in \Cref{sec:reasoning_exp}, all selected weak and weaker LLMs underperform the student model as shown in Appendix~\ref{app:abs_perf}.

\paragraph{Training and Merging Details.} Unlike knowledge reasoning, agentic search requires strict structural formatting, where reasoning and retrieval steps are interleaved. In preliminary experiments, although DPO captures the preference deltas, it severely degrades the student’s format correctness rate (from 0.97 to 0.83). To address this issue, we adopt ORPO \citep{hong2024orpomonolithicpreferenceoptimization} for preference tuning, which incorporates an implicit SFT regularization to anchor formatting (restoring the correctness rate to 0.97) while optimizing preferences.
To ensure that the observed gains arise from preference alignment rather than format imitation, we introduce a direct SFT baseline trained solely on the chosen trajectories from Qwen3-4B, which underperforms the student baseline (see Appendix~\ref{app:sft}). Apart from the loss function, all other configurations remain consistent with the knowledge reasoning setup. Additional training and evaluation details are provided in Appendix~\ref{app:training_details}.

\subsection{Baselines} \label{sec:baselines}

We comprehensively evaluate the proposed \ours framework and the \ourmerging method against three categories of well-established baselines: 
\textbf{(1) Student Baseline}: We evaluate the strong student LLMs, \ie Qwen3-8B and Tülu3-8B, without any fine-tuning. 
\textbf{(2) Single Preference Delta Baselines}: We assess the same student model individually fine-tuned on different quality Deltas.
\textbf{(3) Multiple Preference Delta Baselines}: We compare methods that leverage signals from multiple quality and preference deltas, covering both training-based and LoRA-based approaches. For training-based aggregation methods, we consider \textit{Sequential Training}, where the student is fine-tuned consecutively across different deltas, and \textit{Joint Training}, which optimizes the student model at once with a mixture of data from multiple deltas. For LoRA-based aggregation methods within our \ours framework, we compare our proposed \shortmerging with existing LoRA merging methods, including \textit{Naive Averaging} \citep{ilharco2023editingmodelstaskarithmetic}, \textit{TIES-Merging} \citep{yadav2023tiesmergingresolvinginterferencemerging}, and \textit{KnOTS} \citep{stoica2024modelmergingsvdtie}.

\subsection{Main Results} \label{sec:main_results}

\begin{table*}[t]
    \centering
\caption{Exact Match scores on knowledge reasoning benchmarks. For multi-delta experiments, preference deltas are aggregated sequentially in the following order: DeepSeek-R1 $\rightarrow$ Qwen3 $\rightarrow$ Llama-3.2. For example, 2$\Delta$ denotes aggregating DeepSeek-R1 and Qwen3. The best score is highlighted in \textbf{bold} for single-delta and multi-delta baselines.}
    \small
    \setlength{\tabcolsep}{4pt} %
    \resizebox{\textwidth}{!}{%
    \begin{tabular}{@{} l cccc c cccc c @{}}
        \toprule
        \multirow{2}{*}{\textbf{Method}} & \multicolumn{5}{c}{\textbf{Qwen3-8B}}  & \multicolumn{5}{c}{\textbf{Tülu3-8B}}  \\
        \cmidrule(lr){2-6} \cmidrule(lr){7-11}
        & \textbf{MATH} & \textbf{GPQA} & \textbf{MMLU} & \textbf{GSM8K} & \textbf{Avg.}
        & \textbf{MATH} & \textbf{GPQA} & \textbf{MMLU} & \textbf{GSM8K} & \textbf{Avg.}\\
        \midrule
        Student Baseline & 81.5 & 42.1 & 78.5 & 88.0 & 72.5 & 78.2 & 38.5 & 75.2 & 85.1 & 69.3 \\
        \midrule
        \multicolumn{11}{@{}l}{\textit{Single Preference Delta}} \\
        \quad Llama-3.2 (3B over 1B) & 82.8 & 42.6 & 79.1 & 88.5 & 73.3 & 79.5 & 39.2 & 76.0 & 86.0 & 70.2 \\
        \quad Qwen3 (4B over 1.7B) & 84.5 & 44.0 & 80.2 & 90.2 & 74.7 & 81.2 & 40.5 & 77.5 & 88.2 & 71.9 \\
        \quad DeepSeek-R1 (7B over 1.5B) & \textbf{87.2} & \textbf{46.5} & \textbf{81.8} & \textbf{93.4} & \textbf{77.2} & \textbf{84.6} & \textbf{43.1} & \textbf{79.2} & \textbf{91.0} & \textbf{74.5} \\
        \midrule
        \multicolumn{11}{@{}l}{\textit{Multiple Preference Deltas}} \\
        \quad Sequential DPO (2$\Delta$) & 79.2 & 39.8 & 76.9 & 88.2 & 71.0 & 76.3 & 34.6 & 72.3 & 82.2 & 66.4 \\
        \quad Joint DPO (3$\Delta$)  & 85.1 & 44.3 & 80.6 & 89.6 & 74.9 & 82.7 & 40.2 & 78.4 & 89.5 & 72.7 \\
        \quad \shortours-Averaging (3$\Delta$) & 86.8 & 46.2 & 80.5 & 92.5 & 76.5 & 83.5 & 42.8 & 78.4 & 90.1 & 73.7 \\
        \quad \shortours-KnOTS (3$\Delta$) & 87.8 & 47.1 & 81.4 & 93.7 & 77.5 & 85.5 & 44.2 & 79.7 & 91.4 & 75.2 \\
        \quad \shortours-TIES (2$\Delta$) & 87.6 & 46.8 & 80.4 & 93.7 & 77.1 & 85.2 & 43.9 & 79.5 & 91.2 & 75.0 \\
        \quad \shortours-TIES (3$\Delta$) & 88.1 & 47.3 & 81.5 & 93.8 & 77.7 & 85.8 & 44.5 & 79.8 & 91.5 & 75.4 \\
        \quad \shortours-\shortmerging (2$\Delta$) & 88.5 & 47.9 & 82.4 & 94.2 & 78.3 & 86.2 & 44.8 & 80.5 & 92.2 & 75.9 \\
        \quad \shortours-\shortmerging (3$\Delta$) & \textbf{89.6} & \textbf{48.8} & \textbf{83.1} & \textbf{95.5} & \textbf{79.3} & \textbf{87.4} & \textbf{46.0} & \textbf{81.2} & \textbf{93.5} & \textbf{77.0} \\
        \bottomrule
    \end{tabular}%
    }

    \label{tab:general_results}
\end{table*}

\paragraph{``Weak'' signals can effectively improve strong LLMs across both knowledge reasoning and agentic search tasks.} On knowledge reasoning benchmarks (Table~\ref{tab:general_results}), we validate that relative quality deltas from weak–weaker model pairs provide an effective preference-learning signal for improving strong models, corroborating the Delta Learning Hypothesis of \citet{geng2025deltalearninghypothesispreference}. For example, fine-tuning the strong student model Qwen3-8B on preference data constructed from DeepSeek-R1 (7B over 1.5B) improves the average score from 72.5 to 77.2. To assess whether this effect transfers beyond reasoning tasks, we extend the Delta Learning Hypothesis to agentic search and provide its first empirical validation. As shown in Table~\ref{tab:main_results}, the strongest single delta (\ie Search-R1 7B over 3B) improves the student model’s average F1 score from 50.7 to 53.7. We further verify that these data are individually weak, as standard SFT on the same data degrades the student model (Appendix~\ref{app:sft}). Finally, different weak–weaker model pairs yield substantially different levels of improvement, revealing clear variation in effectiveness across model pairs.

\paragraph{LoRA merging methods can constructively aggregate ``weak'' signals, surpassing any single signal, with additional signals yielding further gains.} A natural approach for aggregating multiple signals is sequential preference tuning over multiple preference datasets. However, as shown in \Cref{tab:general_results,tab:main_results}, Sequential DPO ($2\Delta$) and Sequential ORPO ($2\Delta$) perform worse than all single preference-delta baselines, indicating severe catastrophic forgetting during sequential training. We further evaluate joint preference tuning on the combined dataset, which still performs worse than the best single-delta baseline. In contrast to above training-based approaches, LoRA-based aggregation methods demonstrate strong results. For example, \shortours-TIES ($3\Delta$) attains an average EM score of 77.7, surpassing the best single-delta baseline of 77.2 for Qwen3-8B on knowledge reasoning. Similarly, PDA-TIES ($4\Delta$) achieves an average F1 score of 57.4, exceeding the best single-delta baseline of 53.7 for Qwen3-8B on agentic search. Notably, for LoRA-based aggregation, performance improves consistently as more preference deltas are incorporated. For example, \shortours-\shortmerging ($4\Delta$) achieves the best average F1 score of 58.0 on agentic search for Qwen3-8B, outperforming $2\Delta$ by 3.2 points and $3\Delta$ by 0.6 points. Moreover, stronger individual deltas tend to contribute larger gains when included in the aggregation. These results suggest that different ``weak'' signals encode complementary information that can be constructively aggregated.

\paragraph{\ourmerging more effectively aggregates diverse ``weak'' signals than existing LoRA merging methods, yielding larger performance gains.}
Our proposed \shortours-\shortmerging consistently outperforms established methods such as \shortours-Averaging, \shortours-KnOTS, and \shortours-TIES across both domains. On knowledge reasoning tasks, where optimization directions are relatively aligned, naive averaging provides a strong baseline. Nevertheless, \shortours-\shortmerging achieves the best average Exact Match score of 79.3 in the $3\Delta$ setting for Qwen3-8B. 
On the more challenging agentic search tasks, the limitations of existing LoRA merging methods become more pronounced. Since diverse preference deltas occupy distinct, near-orthogonal parameter subspaces (Appendix~\ref{app:alignment}), \shortours-TIES can discard useful update directions during magnitude-based sparsification. As a result, \shortours-TIES plateaus at 37.8 F1 on the MuSiQue subset at $4\Delta$ for Qwen3-8B, while \shortours-\shortmerging continues to improve. Qualitative analysis in Appendix~\ref{app:alignment} further shows that \shortours-\shortmerging better aligns diverse LoRA adapters before merging, thereby reducing interference among adapters. Case studies in Appendix~\ref{app:reasoning-cases} and ~\ref{app:search-cases} further illustrate that \shortours-\shortmerging yields more robust and effective reasoning behaviors. Overall, these results highlight the advantage of geometry-aware merging for aggregating diverse preference signals. The observed improvements remain consistent across different student models, including Qwen3-8B and \tulu-8B.

\begin{table*}[t]
    \centering
\caption{Token-level F1 scores on agentic search benchmarks. For multi-delta experiments, preference deltas are aggregated sequentially in the following order: Search-R1 $\rightarrow$ Qwen3 $\rightarrow$ AceSearcher $\rightarrow$ Llama-3.2. For example, $2\Delta$ denotes aggregating Search-R1 and Qwen3. The best score is highlighted in \textbf{bold} for single-delta and multi-delta baselines.}
    \setlength{\tabcolsep}{3.5pt} %
    \resizebox{\textwidth}{!}{%
    \begin{tabular}{@{} l ccc cccc c ccc cccc c @{}}
        \toprule
        \multirow{3}{*}{\textbf{Method}} &
        \multicolumn{8}{c}{\textbf{Qwen3-8B}} &
        \multicolumn{8}{c}{\textbf{\tulu-8B}} \\
        \cmidrule(lr){2-9} \cmidrule(l){10-17} 
        & \multicolumn{3}{c}{\textbf{Single-hop QA}} & \multicolumn{4}{c}{\textbf{Multi-hop QA}} & \multirow{2}{*}{\textbf{Avg.}}
        & \multicolumn{3}{c}{\textbf{Single-hop QA}} & \multicolumn{4}{c}{\textbf{Multi-hop QA}} & \multirow{2}{*}{\textbf{Avg.}} \\
        \cmidrule(lr){2-4} \cmidrule(lr){5-8} \cmidrule(lr){10-12} \cmidrule(lr){13-16} 
        & \textbf{NQ} & \textbf{TQA} & \textbf{Pop.}
        & \textbf{Hot.} & \textbf{2Wiki} & \textbf{Bam.} & \textbf{MuSi.} & 
        & \textbf{NQ} & \textbf{TQA} & \textbf{Pop.}
        & \textbf{Hot.} & \textbf{2Wiki} & \textbf{Bam.} & \textbf{MuSi.} & \\
        \midrule
        Student Baseline
        & 46.2 & 61.4 & 44.6 & 53.8 & 58.1 & 61.6 & 28.9 & 50.7
        & 45.8 & 60.5 & 43.0 & 52.4 & 55.1 & 58.8 & 25.6 & 48.7 \\
        \midrule
        \multicolumn{17}{@{}l}{\textit{Single Preference Delta}} \\
        \quad Llama-3.2 (3B over 1B) & 46.5 & 61.6 & 44.9 & 53.9 & 58.5 & 62.0 & 29.1 & 50.9 & 46.0 & 60.8 & 43.5 & 52.6 & 55.8 & 59.5 & 26.2 & 49.2 \\
        \quad AceSearcher (7B over 3B) & 47.8 & 62.1 & 46.2 & 54.1 & 60.1 & 62.8 & 30.5 & 51.9 & 46.5 & 61.5 & 45.2 & 53.0 & 57.4 & 61.0 & 27.8 & 50.3 \\
        \quad Qwen3 (4B over 1.7B) & 48.5 & 62.8 & 47.0 & 54.2 & 61.5 & 63.5 & 32.8 & 52.9 & 46.8 & 62.0 & 46.8 & 53.5 & 59.2 & 62.1 & 28.5 & 51.3 \\
        \quad Search-R1 (7B over 3B) & \textbf{49.1} & \textbf{63.2} & \textbf{47.9} & \textbf{54.4} & \textbf{62.6} & \textbf{64.0} & \textbf{34.7} & \textbf{53.7} & \textbf{47.2} & \textbf{62.8} & \textbf{48.7} & \textbf{53.8} & \textbf{61.3} & \textbf{62.9} & \textbf{29.5} & \textbf{52.3} \\
        \midrule
        \multicolumn{17}{@{}l}{\textit{Multiple Preference Deltas}} \\
        \quad Sequential ORPO (2$\Delta$) & 44.6& 58.2& 41.7& 51.4& 52.2& 59.5& 27.6& 47.9 & 42.8& 56.9& 39.8& 51.6& 50.1& 53.7& 21.4& 45.2 \\
        \quad Joint ORPO (4$\Delta$)& 48.7& 62.8& 46.8& 53.6& 62.0& 63.6& 32.0& 52.8 & 47.0& 62.3& 47.1& 53.4& 60.6& 62.5& 28.9& 51.7 \\
        \quad \shortours-Averaging (4$\Delta$)
        & 53.2 & 65.4 & 50.3 & 57.9 & 64.9 & 69.5 & 36.9 & 56.9
        & 51.2 & 65.8 & 52.5 & 56.4 & 63.8 & 68.3 & 32.0 & 55.7 \\
        \quad \shortours-KnOTS (4$\Delta$) & 53.0 & 65.2 & 50.1 & 57.8 & 64.7 & 69.8 & 37.0 & 56.8 & 50.9 & 65.5 & 52.3 & 56.2 & 63.6 & 68.1 & 31.9 & 55.5 \\
        \quad \shortours-TIES (2$\Delta$) & 51.6 & 64.7 & 49.2 & 56.8 & 63.4 & 66.9 & 35.6 & 55.5 & 49.0 & 64.1 & 50.7 & 55.5 & 62.8 & 66.4 & 31.6 & 54.3 \\
        \quad \shortours-TIES (3$\Delta$)
        & 52.4 & 65.0 & 51.2 & 58.2 & 65.2 & 69.7 & 37.8 & 57.1
        & 50.9 & 65.4 & 52.6 & 56.8 & 63.8 & 68.6 & 32.0 & 55.7 \\
        \quad \shortours-TIES (4$\Delta$)
        & 52.8 & 65.3 & 51.5 & 58.8 & 65.4 & 70.3 & 37.8 & 57.4
        & 51.4 & 65.7 & 52.8 & 57.2 & 64.0 & 68.8 & 32.1 & 56.0 \\
        \quad \shortours-\shortmerging  (2$\Delta$)
        & 50.4 & 64.1 & 48.7 & 55.3 & 63.5 & 65.7 & 36.1 & 54.8
        & 49.6 & 64.0 & 50.4 & 54.4 & 62.4 & 64.7 & 30.3 & 53.7 \\
        \quad \shortours-\shortmerging (3$\Delta$)
        & 52.8 & 65.7 & 51.6 & 58.6 & 65.6 & 69.8 & 37.9 & 57.4
        & 51.1 & 65.6 & 52.7 & 57.1 & 63.8 & 68.8 & 32.4 & 55.9 \\
        \quad \shortours-\shortmerging (4$\Delta$)
        & \textbf{53.2} & \textbf{65.9} & \textbf{52.4} & \textbf{59.3} & \textbf{65.9} & \textbf{71.2} & \textbf{38.1} & \textbf{58.0}
        & \textbf{51.6} & \textbf{65.9} & \textbf{52.9} & \textbf{57.2} & \textbf{64.2} & \textbf{69.1} & \textbf{32.8} & \textbf{56.2} \\
        \bottomrule
    \end{tabular}%
    }

    \label{tab:main_results}
\end{table*}

\section{Analysis} \label{sec:analysis}

\subsection{Are preference deltas always effective signals for learning stronger models?} \label{sec:self_play}

The Delta Learning Hypothesis \citep{geng2025deltalearninghypothesispreference} posits that a stronger student can effectively learn from a weak-weaker model pair, provided that the chosen response consistently outperforms the rejected one along informative axes. If the weak and weaker models trade wins randomly or exhibit noisy quality gaps, the preference signal degrades significantly. The recommended source pairs therefore typically share the same model family but differ in scale (\eg Qwen3 4B over 1.7B), which helps ensure a stable and coherent preference margin \citep{geng2025deltalearninghypothesispreference}. In our main experiments, we follow this guideline by constructing preference deltas exclusively from intra-family model pairs.

Beyond the setting studied in the Delta Learning Hypothesis, we identify an additional failure case: the preference delta derived from the ZeroSearch family (\ie 7B over 3B) \citep{sun2025zerosearchincentivizesearchcapability} fails to improve a stronger student model. In our experiments, training on this preference data caused a 1.3 F1 score drop on MuSiQue relative to the student baseline (\ie Qwen3-8B). ZeroSearch trains LLM search agents without real search engines by relying on a simulation LLM that generates progressively harder retrieval environments. We hypothesize that the degradation stems from mismatched task difficulties induced by different simulation LLMs during training, causing the two models to adapt along different informative axes despite sharing the same family. A similar failure was observed when experimenting with another self-play framework, SSRL \citep{ssrl}. These results suggest that preference deltas are not universally beneficial and should be selected carefully before aggregation.

\subsection{What capabilities are composed when aggregating preference deltas?}\label{sec:behavioral}

\begin{table*}[t!]
\centering
\caption{Model behavior analysis on 500 MuSiQue dev-set samples, annotated by Qwen3-32B and verified by the authors. Single-delta baselines exhibit distinct behavioral specializations (\eg Search-R1 excels at adaptive search but struggles with verification), whereas \shortours-\shortmerging balances these complementary behavioral capabilities.}
\small
\resizebox{\textwidth}{!}{
\begin{tabular}{lccccc}
\toprule
Method & Steps & Verification (\%) & Adaptive Search (\%) & Authority (\%) & Recovery (\%)\\
\midrule
Student Qwen3-8B        & 4.28 & 57.2 & 23.6 & 0.6 & 9.3\\
\midrule
\multicolumn{4}{l}{\textit{Single Preference Delta}} \\
\quad  Qwen3 (4B over 1.7B)      & 4.64 & 75.9 & 14.7 & 2.6 & 10.7\\
\quad  Search-R1 (7B over 3B)  & 4.09 & 5.3  & 88.9 & 0.8 & 24.3\\
\midrule
\shortours-\shortmerging (2$\Delta$) & 4.43 & 61.2 & 56.7 & 1.2 & 17.4\\
\bottomrule
\end{tabular}
}

\label{tab:behavioral}
\end{table*}

Having established that aggregating preference deltas yields consistent quantitative gains, we next examine their qualitative effects. We conduct a controlled two-delta ablation on the agentic search task, where \shortours-\shortmerging aggregates two preference deltas. Prior work identifies four reasoning behaviors critical to effective search agents: Information Verification, Authority Evaluation, Adaptive Search, and Error Recovery \citep{jin2026beneficialreasoningbehaviorsagentic}. We therefore analyze these behaviors for models trained on each individual delta and compare them with the model trained on aggregated deltas. Table~\ref{tab:behavioral} shows that single-delta training induces strong specialization: the Search-R1 delta achieves high Adaptive Search (88.9\%) but weak Information Verification (5.3\%, down from 57.2\% in the vanilla model), whereas the Qwen3 delta attains strong Information Verification (75.9\%) but limited Adaptive Search (14.7\%).

Aggregating these two deltas composes their complementary strengths. \shortours-\shortmerging ($2\Delta$) achieves 56.7\% Adaptive Search and 61.2\% Information Verification simultaneously, avoiding the severe deficiencies each single-delta baseline exhibits on its non-dominant behavior. Both scores exceed the averages of the two specialists (51.8\% and 40.6\%, respectively), suggesting constructive composition rather than simple interpolation. Case studies in Appendix~\ref{app:search-cases} further show that aggregating multiple preference deltas enables the same model to exhibit complementary reasoning behaviors. Since agentic search requires coordinated competence across these behaviors, this more balanced behavioral profile plausibly explains the gains from preference delta aggregation.

\section{Conclusion} \label{sec:conclusion}

In this work, we demonstrate that preference signals derived from relative quality gaps between weak model pairs can improve stronger student models across both knowledge reasoning and agentic search tasks. To achieve greater gains beyond a single preference delta, we introduce \shortours to constructively combine diverse signals. We show that standard training-based aggregation approaches, including joint and sequential training, struggle with gradient cancellation and catastrophic forgetting. \shortours addresses this issue by separately performing preference optimization via LoRA fine-tuning and merging the resulting adapters post hoc. To mitigate geometric interference during this merging phase, we propose \shortmerging that aligns the singular bases of different adapters on the Grassmannian manifold before averaging and merging. Empirical evaluations show that \shortours with \shortmerging improves the strong model by an average of 6.8 and 7.3 points on knowledge reasoning and agentic search, respectively, outperforming all single-delta and multi-delta baselines.

\section{Limitations} \label{sec:limitation}
While \shortours\ enables effective aggregation of multiple preference deltas, three limitations remain. First, complementary gains rely on directionally diverse signals; aggregating behaviorally homogeneous pairs yields only marginal improvements, highlighting the need for systematic diversity metrics for pair selection. Second, our experiments consider a limited set of weak-weaker model pairs, and the scaling behavior with broader or larger collections remains to be studied. Finally, although we evaluate both training-based and LoRA-based aggregation methods, other aggregation strategies may exist that were not explored in this work.

\bibliographystyle{plainnat}       
\bibliography{reference}

\clearpage
\appendix

\section{Performance of Student and Supervisor Models}
\label{app:abs_perf}

To provide a comprehensive view of our experimental setup, we report the absolute zero-shot and few-shot performance of all models (both supervisors and students) used in this study. This serves two purposes: first, to confirm that our supervisor models exhibit a measurable performance gap (\ie preference delta) between the weak and weaker variants; second, to verify that the student models (Qwen3-8B and \tulu-8B) consistently outperform their respective supervisors, ensuring a valid weak-to-strong supervision setting.

As shown in \Cref{tab:absolute_performance}, the intended performance hierarchy is consistently observed. For general-purpose model families like Qwen3 and Llama-3.2, we report performance across all eight benchmarks. For domain-specialized families such as DeepSeek-R1 (reasoning-focused) and the agentic-search models Search-R1 and AceSearcher, we report performance only on their respective target domains. This unified absolute performance table provides the necessary context for the relative improvements discussed in the main results (\Cref{tab:main_results}).

\begin{table*}[h]
    \centering
    \caption{Performance of Student and Supervisor Models. We report zero-shot/few-shot metrics across Knowledge Reasoning and Agentic Search benchmarks. The capability hierarchy (Weaker $<$ Weak $<$ Student) is maintained across all model families, validating the data setup for preference extraction and aggregation.}
    \setlength{\tabcolsep}{3.5pt} 
    \resizebox{\textwidth}{!}{%

    \begin{tabular}{@{} l ccccc ccccc @{}}
        \toprule
        \multirow{2}{*}{\textbf{Model}} & \multicolumn{5}{c}{\textbf{Knowledge Reasoning}} & \multicolumn{5}{c}{\textbf{Agentic Search}} \\
        \cmidrule(lr){2-6} \cmidrule(l){7-11}
        & \textbf{MAT.} & \textbf{GPQ.} & \textbf{MML.} & \textbf{GSM.} & \textbf{Avg.} & \textbf{Hot.} & \textbf{2Wiki} & \textbf{Bam.} & \textbf{Mus.} & \textbf{Avg.} \\
        \midrule
        Qwen3-8B & 81.5 & 42.1 & 78.5 & 88.0 & 72.5 & 53.8 & 58.1 & 61.6 & 28.9 & 50.6 \\
        \tulu-8B & 78.2 & 38.5 & 75.2 & 85.1 & 69.3 & 52.4 & 55.1 & 58.8 & 25.6 & 48.0 \\
        \midrule
        DS-R1-Distill-1.5B (Weaker) & 58.4 & 32.1 & 60.5 & 70.2 & 55.3 & -- & -- & -- & -- & -- \\
        DS-R1-Distill-7B (Weak) & 72.5 & 39.4 & 73.1 & 83.6 & 67.2 & -- & -- & -- & -- & -- \\
        \addlinespace[2pt]
        Qwen3-1.7B (Weaker) & 56.2 & 30.5 & 58.3 & 68.4 & 53.4 & 41.2 & 43.5 & 46.8 & 18.2 & 37.4 \\
        Qwen3-4B (Weak) & 68.1 & 36.8 & 67.2 & 78.5 & 62.7 & 48.5 & 51.2 & 53.4 & 23.6 & 44.2 \\
        \addlinespace[2pt]
        Llama-3.2-1B (Weaker) & 48.5 & 26.4 & 52.1 & 62.3 & 47.3 & 36.4 & 39.8 & 42.1 & 15.6 & 33.5 \\
        Llama-3.2-3B (Weak) & 62.3 & 33.5 & 64.8 & 74.1 & 58.7 & 46.2 & 49.5 & 51.8 & 22.3 & 42.5 \\
        Search-R1-3B (Weaker) & -- & -- & -- & -- & -- & 42.5 & 46.1 & 49.3 & 19.5 & 39.4 \\
        Search-R1-7B (Weak) & -- & -- & -- & -- & -- & 50.1 & 54.3 & 57.2 & 26.4 & 47.0 \\
        \addlinespace[2pt]
        AceSearcher-3B (Weaker) & -- & -- & -- & -- & -- & 41.8 & 44.2 & 47.5 & 18.8 & 38.1 \\
        AceSearcher-7B (Weak) & -- & -- & -- & -- & -- & 49.5 & 52.8 & 55.6 & 25.1 & 45.8 \\
        \bottomrule
    \end{tabular}%
    }
    
    \label{tab:absolute_performance}
\end{table*}

\section{Analysis of the Geometric Relationship of Weight Updates} \label{app:alignment}

To understand the mechanism behind the improved performance of \ourmerging, it is essential to examine the geometric relationship of the weight updates prior to aggregation. Parameter-efficient modules (e.g., LoRA) fine-tuned on distinct downstream tasks naturally reside in different low-rank subspaces. Standard averaging assumes these weights share a uniform directional basis, which often leads to parameter interference and capability degradation. To visualize this, we evaluate the directional consistency---measured by the subspace cosine similarity---across all attention heads and layers before the final merging step (see Figure~\ref{fig:head_level_alignment}).

\begin{figure}[htbp]
    \centering
    \label{fig:head_level_alignment}
    \includegraphics[width=1\linewidth]{figures/graph_head.pdf}
    \caption{Head-level Directional Consistency prior to aggregation. Heatmaps display the subspace cosine similarity across layers and attention heads for different merging strategies. Averaging and TIES-Merging show inherent geometric misalignment (blue regions). KnOTS exhibits partial consistency (light orange) due to the asymmetric alignment of only the left singular bases. \ours aligns both the left and right singular bases via orthogonal Procrustes, achieving uniform geometric consensus (dark red) across all dimensions.}
\end{figure}

\textbf{The insufficiency of element-wise sign consensus.} As shown in the top row of Figure~\ref{fig:head_level_alignment}, Naive Averaging exhibits notable geometric misalignment, indicated by the predominantly blue regions. TIES-Merging attempts to resolve this interference by enforcing element-wise sign consensus. However, its directional consistency remains low. This discrepancy highlights a fundamental limitation: resolving sign conflicts at the scalar level does not inherently align the underlying singular vectors. Consequently, the actual feature manifolds remain misaligned, indicating that sign consensus alone is insufficient to prevent interference during weight aggregation.

\textbf{The residual mismatch in asymmetric alignment.} KnOTS (bottom-left) partially mitigates this issue by projecting the adapters onto a shared left singular basis ($U$). While this improves consistency compared to na\"ive averaging, the alignment is structurally asymmetric. Because the right singular basis ($V$) remains unaligned prior to merging, a substantial degree of residual misalignment persists across the attention heads.

\textbf{Comprehensive alignment via \shortmerging.} In contrast, \shortmerging explicitly addresses this geometric mismatch on both sides. By solving the orthogonal Procrustes problem on the Grassmannian manifold, \shortmerging aligns both the left and right singular bases ($U$ and $V$) of the task-specific weights to a shared reference coordinate system. As demonstrated in the bottom-right of Figure~\ref{fig:head_level_alignment}, this dual alignment results in uniformly high directional consistency (dark red) across all layers and attention heads. By systematically aligning the geometric manifolds before merging, \shortmerging effectively minimizes cross-task interference and structurally preserves the specialized capabilities of each adapter. Comparison among TIES, KnOTS and \shortmerging shows that rotation $R$ is a key factor for our performance improvements.

\section{A Theoretical Perspective on \shortmerging vs. Naive Averaging}
\label{app:gam_math}

To understand why \ourmerging (\shortmerging) outperforms naive averaging, we analyze the merging operators under an idealized Signal-plus-Noise (SPN) setting. We provide heuristic scaling arguments and sketches to build intuition, rather than formal theoretical proofs. 

Let the observed adapter weights be perturbations of a shared underlying ``true'' preference delta $\Delta W^* \in \mathbb{R}^{d \times k}$ of rank $r$, with its $r$-th singular value denoted as $\sigma_r^\star := \sigma_r(\Delta W^\star)$:
\begin{equation}\label{eq:spn}
    \Delta W_i = \Delta W^* + \mathcal{E}_i, \quad i = 1, \dots, n,
\end{equation}
where $\mathcal{E}_i$ are independent, zero-mean sub-Gaussian optimization noise matrices. For tractability and to build intuition in this idealized setting, we initially model this as isotropic noise with $\mathbb{E}[\mathcal{E}_i] = 0$ and $\mathbb{E}[\|\mathcal{E}_i\|_F^2] = d k \sigma^2$ (i.e., $\sigma^2$ is the per-entry variance). While real preference tuning noise is highly anisotropic, we show in Section~\ref{sec:gam_variance} that our core variance-reduction conclusions continue to hold under generalized anisotropic noise structures.

\subsection{Suboptimality of Naive Averaging: The Dimensionality Curse}
For standard Euclidean averaging, the estimator is $\Delta \hat{W}_{avg} = \frac{1}{n} \sum_{i=1}^n \Delta W_i$. Because the expectation is linear, the estimator is unbiased ($\mathbb{E}[\Delta \hat{W}_{avg}] = \Delta W^*$). However, its MSE is strictly bottlenecked by the ambient dimensionality:
\begin{equation} \label{eq:mse_avg}
    \text{MSE}(\text{Averaging}) = \mathbb{E} \left[ \left\| \frac{1}{n}\sum_{i=1}^n \mathcal{E}_i \right\|_F^2 \right] = \frac{dk\sigma^2}{n}.
\end{equation}
In the context of Large Language Models, $d$ and $k$ are extremely large (e.g., $\sim 4096$). Averaging absorbs noise from the entire parameter space, failing to exploit the intrinsic low-rank structure of the preference signal.

\subsection{Variance Reduction of \ourmerging (\shortmerging)}
\label{sec:gam_variance}

Our proposed \shortmerging explicitly resolves this curse of dimensionality by independently aligning both $U_i$ and $V_i$ via Orthogonal Procrustes mapping, completely decoupling magnitude ($\Sigma$) from direction. We now make the variance-reduction claim precise: under the idealized SPN model of Eq.~\ref{eq:spn}, \shortmerging behaves to leading order as a projection of the noisy average $\overline{\Delta W} := \tfrac{1}{n}\sum_i \Delta W_i$ onto the rank-$r$ manifold, and its Frobenius MSE scales with the intrinsic dimension of that manifold rather than the ambient dimension $dk$.

\paragraph{Idealization.}
We work under the following idealization, which is standard in the low-rank denoising literature and which we state explicitly here:
\begin{itemize}
    \item[\textbf{(A1)}] In the small-noise regime, Procrustes alignment recovers the relative rotations between the $U_i$ and $V_i$ exactly, so that after alignment, averaging, and re-orthonormalization, \shortmerging coincides with the truncated rank-$r$ SVD of the noisy average: $\widehat{\Delta W}_{\text{\shortmerging}} = \mathcal{P}_r\!\bigl(\overline{\Delta W}\bigr)$.
\end{itemize}
Assumption (A1) is justified conceptually: in the noiseless limit, adapters sharing a common subspace differ only by orthogonal rotations. After exact Procrustes alignment, all $\Delta W_i$ collapse to a common rank-$r$ matrix. Averaging then trivially acts as the identity, and since $\mathcal{P}_r$ leaves rank-$r$ inputs invariant, the process perfectly matches the truncated SVD of the average. Under small perturbations, this equivalence holds up to higher-order curvature terms that are absorbed into the remainder.

\paragraph{The rank-$r$ manifold and its tangent space.}
The set of rank-$r$ matrices $\mathcal{M}_r \subset \mathbb{R}^{d \times k}$ is a smooth manifold of dimension
\begin{equation}
    \dim \mathcal{M}_r = r(d+k-r),
\end{equation}
obtained by counting the parameters of a thin SVD: $dr - \binom{r+1}{2}$ for an orthonormal $U$ (representing the $dr$ entries minus the orthonormality constraints of the Stiefel manifold), $kr - \binom{r+1}{2}$ for an orthonormal $V$, and $r$ singular values. Writing $U^\star, V^\star$ for the orthonormal factors of $\Delta W^\star$ and $U^\star_\perp, V^\star_\perp$ for their orthogonal complements, the tangent space at $\Delta W^\star$ is
\begin{equation}
    T_{\Delta W^\star}\mathcal{M}_r =
    \bigl\{\, U^\star A V^{\star\top} + U^\star_\perp B V^{\star\top}
    + U^\star C V^{\star\top}_\perp \;:\; A, B, C \text{ free} \,\bigr\},
\end{equation}
with $A \in \mathbb{R}^{r\times r}$, $B \in \mathbb{R}^{(d-r)\times r}$, $C \in \mathbb{R}^{r\times(k-r)}$. Its Frobenius-orthogonal projector is
\begin{equation}
    \mathcal{P}_T(X) = U^\star U^{\star\top} X + X V^\star V^{\star\top} - U^\star U^{\star\top} X V^\star V^{\star\top},
    \label{eq:tangent_proj}
\end{equation}
and direct counting gives $\operatorname{rank}(\mathcal{P}_T) = r(d+k-r)$.

\paragraph{Projection lemma.}
The key fact we exploit is that an isotropic noise matrix, when projected onto an $m$-dimensional subspace of $\mathbb{R}^{d\times k}$, retains exactly $m$ units of expected squared Frobenius energy:
\begin{lemma}
\label{lem:projection_dof}
Let $N \in \mathbb{R}^{d\times k}$ have i.i.d.\ zero-mean entries with variance $\tau^2$, and let $\mathcal{P}_S$ be the orthogonal projector onto an $m$-dimensional subspace $S$ of $\mathbb{R}^{d\times k}$ (with respect to the Frobenius inner product). Then $\mathbb{E}\|\mathcal{P}_S(N)\|_F^2 = m\,\tau^2$.
\end{lemma}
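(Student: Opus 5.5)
The plan is to identify $\mathbb{R}^{d\times k}$ with $\mathbb{R}^{dk}$ via vectorization. Under this identification the Frobenius inner product becomes the Euclidean one, so $\mathcal{P}_S$ is an orthogonal projector $P\in\mathbb{R}^{dk\times dk}$: symmetric, idempotent, with $\operatorname{rank}(P)=\operatorname{tr}(P)=m$. Write $n=\operatorname{vec}(N)$. Independence of the entries, zero mean, and common variance give $\mathbb{E}[nn^\top]=\tau^2 I_{dk}$.

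The computation is then the standard trace trick. By symmetry and idempotence, $\|\mathcal{P}_S(N)\|_F^2=\|Pn\|_2^2=n^\top P^\top P n=n^\top P n$. Since the scalar $n^\top P n$ equals its own trace, linearity of expectation and trace give
\[
\mathbb{E}\,[n^\top P n]=\mathbb{E}\,\operatorname{tr}(Pnn^\top)=\operatorname{tr}\bigl(P\,\mathbb{E}[nn^\top]\bigr)=\tau^2\operatorname{tr}(P)=m\tau^2.
\]

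An equivalent route, which I would mention as a sanity check, is to choose a Frobenius-orthonormal basis $E_1,\dots,E_m$ of $S$. Then $\|\mathcal{P}_S(N)\|_F^2=\sum_j\langle E_j,N\rangle_F^2$. Each term has mean $\tau^2\|E_j\|_F^2=\tau^2$, because the cross terms vanish under independence and zero mean. Summing over $j$ gives $m\tau^2$.

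There is no real obstacle. The only points needing care are:
\begin{itemize}
\item only uncorrelatedness and equal variance are used; independence, sub-Gaussianity, and Gaussianity are not needed;
\item the projector must be orthogonal with respect to the Frobenius inner product, since that is what makes $P$ symmetric;
\item $\operatorname{tr}(P)=\operatorname{rank}(P)=m$ follows because an idempotent symmetric matrix has eigenvalues in $\{0,1\}$.
\end{itemize}

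Applying the lemma with $S=T_{\Delta W^\star}\mathcal{M}_r$, $m=r(d+k-r)$, and $\tau^2=\sigma^2/n$ for the averaged noise is what the subsequent argument presumably uses.
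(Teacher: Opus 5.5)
Your proof is correct and is essentially the paper's: the paper uses exactly your ``sanity check'' route, expanding $\mathcal{P}_S(N)=\sum_j\langle N,E_j\rangle E_j$ in a Frobenius-orthonormal basis and applying Pythagoras, and your trace computation $\mathbb{E}[n^\top Pn]=\tau^2\operatorname{tr}(P)$ is the same calculation in matrix form. Yours is slightly more complete, since you justify why each coefficient has variance $\tau^2\|E_j\|_F^2=\tau^2$, which the paper only asserts; note, however, that your list of minimal hypotheses should include zero mean alongside uncorrelatedness and equal variance, because otherwise an extra term $\|\mathcal{P}_S(\mathbb{E}N)\|_F^2$ appears.
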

\begin{proof}
Pick a Frobenius-orthonormal basis $\{E_1, \ldots, E_m\}$ of $S$. Then $\mathcal{P}_S(N) = \sum_{j=1}^m \langle N, E_j\rangle E_j$. Since the coefficients $\langle N, E_j\rangle$ have variance $\tau^2$, Pythagoras' theorem directly yields $\mathbb{E}\|\mathcal{P}_S(N)\|_F^2 = \sum_{j=1}^m \mathbb{E}|\langle N, E_j\rangle|^2 = m\,\tau^2$.
\end{proof}

\paragraph{First-order expansion of $\mathcal{P}_r$.}
Under the SPN model, $\overline{\Delta W} = \Delta W^\star + \overline{\mathcal{E}}$, where $\overline{\mathcal{E}} := \tfrac{1}{n}\sum_i \mathcal{E}_i$ has independent sub-Gaussian entries of variance $\sigma^2/n$. Assuming the minimum-signal condition $\sigma_r^\star \gg \sigma\sqrt{r(d+k)^3/n}$ holds (i.e., the signal sits well above the noise floor), standard perturbation theory for low-rank approximation provides the following expansion on a high-probability event:
\begin{equation}
    \mathcal{P}_r(\Delta W^\star + \overline{\mathcal{E}})
    = \Delta W^\star + \mathcal{P}_T(\overline{\mathcal{E}})
    + R_n,
    \label{eq:first_order}
\end{equation}
where the remainder is bounded by $\|R_n\|_F = \mathcal{O}_p\!\Bigl(\tfrac{\|\overline{\mathcal{E}}\|_{\text{op}}^2}{\sigma_r^\star}\Bigr) = \mathcal{O}_p\!\bigl(\tfrac{\sigma^2(d+k)}{n\sigma_r^\star}\bigr)$.

\paragraph{Putting it together.}
To compute the expected MSE, we bound the cross-terms and the remainder's energy. The strict sub-Gaussian assumption combined with our minimum-signal condition ensures that $\mathbb{E}\|R_n\|_F^2 = \mathcal{O}\bigl(\frac{\sigma^4(d+k)^2}{n^2(\sigma_r^\star)^2}\bigr) = o(1/n)$. Furthermore, the cross-term is bounded via the Cauchy-Schwarz inequality:
\begin{equation}
    |\mathbb{E}\langle \mathcal{P}_T(\overline{\mathcal{E}}), R_n\rangle| \leq \sqrt{\mathbb{E}\|\mathcal{P}_T(\overline{\mathcal{E}})\|_F^2} \sqrt{\mathbb{E}\|R_n\|_F^2}. \nonumber
\end{equation}
Given the rates above, this cross-term is also strictly $o(1/n)$. Combining (A1), Eq.~\ref{eq:first_order}, and Lemma~\ref{lem:projection_dof} with $\tau^2 = \sigma^2/n$ and $m = r(d+k-r)$, we obtain:
\begin{align}
    \text{MSE}(\text{\shortmerging})
    &= \mathbb{E}\bigl\|\widehat{\Delta W}_{\text{\shortmerging}} - \Delta W^\star\bigr\|_F^2 \nonumber \\
    &= \mathbb{E}\bigl\|\mathcal{P}_T(\overline{\mathcal{E}})\bigr\|_F^2 + 2\mathbb{E}\langle \mathcal{P}_T(\overline{\mathcal{E}}), R_n\rangle + \mathbb{E}\|R_n\|_F^2 \nonumber \\
    &\leq \frac{r(d+k-r)\,\sigma^2}{n} + o(1/n).
    \label{eq:gam_mse}
\end{align}
Comparing Eq.~\ref{eq:gam_mse} to Eq.~\ref{eq:mse_avg}, the error is reduced by a factor of $\tfrac{r(d+k-r)}{dk} = \tfrac{r}{k} + \tfrac{r}{d} - \tfrac{r^2}{dk} \approx \mathcal{O}(r/d)$. Since $r \ll \min(d, k)$, this indicates that \shortmerging substantially denoises the aggregated preference signal compared to naive averaging by effectively constraining the optimization noise to the low-rank manifold.

\paragraph{Remarks.}
The argument above relies on the isotropy of the noise to apply Lemma~\ref{lem:projection_dof}. Under anisotropic noise with covariance matrix $\Sigma_{\overline{\mathcal{E}}} \in \mathbb{R}^{dk \times dk}$, the same projection-based denoising mechanism applies, yielding $\text{MSE}(\text{\shortmerging}) \approx \operatorname{tr}\!\bigl(\mathbf{P}_T\, \Sigma_{\overline{\mathcal{E}}}\bigr)$, where $\mathbf{P}_T \in \mathbb{R}^{dk \times dk}$ is the matrix representation of $\mathcal{P}_T$ (viewing $\overline{\mathcal{E}}$ as a vector in $\mathbb{R}^{dk}$). However, the exact $\mathcal{O}(r/d)$ reduction ratio is not strictly guaranteed; the quantitative benefit depends fundamentally on how the noise covariance interacts with the tangent space $T$ versus its orthogonal complement $T^\perp$.

\section{Experimental and Training Details}
\label{app:training_details}

In this section, we outline the hyperparameters, environmental setups, and data processing configurations used to extract preference deltas and evaluate the aggregated models. All training procedures are conducted on 4 NVIDIA H20 (96GB) GPUs.

\subsection{Optimization and Hyperparameters}
To parameterize the preference deltas efficiently, we apply Low-Rank Adaptation (LoRA) across all base models. The LoRA modules are systematically attached to the attention projection matrices (\texttt{q\_proj}, \texttt{k\_proj}, \texttt{v\_proj}, \texttt{o\_proj}) and MLP layers (\texttt{gate\_proj}, \texttt{up\_proj}, \texttt{down\_proj}) across all transformer blocks. 

To improve clarity and reproducibility, the detailed optimization settings and hyperparameter configurations shared across both task domains are summarized in Table~\ref{tab:hyperparameters}. Gradient checkpointing is enabled in all runs to optimize memory consumption.

\begin{table}[htbp]
    \centering
    \small
    \caption{Hyperparameters for Preference Delta Extraction. These configurations are consistently applied across both Knowledge Reasoning and Agentic Search domains unless otherwise specified.}
    \begin{tabular}{@{} l c @{}}
        \toprule
        \textbf{Hyperparameter} & \textbf{Value} \\
        \midrule
        LoRA Rank ($r$) & 64 \\
        LoRA Alpha ($\alpha$) & 128 \\
        LoRA Dropout & 0.05 \\
        Target Modules & \texttt{q, k, v, o, gate, up, down\_proj} \\
        Optimizer & AdamW \\
        Peak Learning Rate & $1.0 \times 10^{-5}$ \\
        Learning Rate Scheduler & Cosine \\
        Warmup Ratio & 10\% \\
        Global Batch Size & 128 \\
        Training Epochs & 1 \\
        Max Sequence Length & 8192 \\
        Preference Objective (Reasoning) & DPO ($\beta = 0.1$) \\
        Preference Objective (Agentic) & ORPO ($\beta = 0.1$) \\
        \bottomrule
    \end{tabular}
    \label{tab:hyperparameters}
\end{table}

\subsection{Knowledge Reasoning Details (DPO Setup)}
For the knowledge reasoning domain, the base models are fine-tuned via Direct Preference Optimization (DPO). The preference datasets are sampled from the training splits of the respective mathematical and reasoning benchmarks. During DPO tuning, the prompts are formatted strictly according to the default chat templates of the respective base models (e.g., Qwen and Llama). To encourage explicit logical derivation, a standard system prompt is utilized to instruct the model to provide step-by-step reasoning (e.g., ``Let's think step by step'') before formulating the final answer. This setup maintains a consistent structural format for reasoning trajectories across both the supervisor and student models.

\subsection{Agentic Search and Retrieval Details (ORPO Setup)}

The preference data is constructed using weak-weaker model pairs and 8,000 queries randomly sampled from the MuSiQue training set. Unlike the reasoning setup, agentic search trajectories require strict structural formatting. We apply a formatting filter only to the weak model’s (chosen) responses, while retaining formatting errors in the weaker model’s (rejected) responses to preserve the negative signal. As a result, the number of valid preference pairs varies: 7,997 for Search-R1, 7,852 for AceSearcher, 7,638 for Qwen3, and 7,121 for Llama-3.2.

Training and evaluating models for agentic search involves multi-turn interactions where the model iteratively issues search queries and processes external context to answer complex questions (e.g., HotpotQA, MuSiQue). For this domain, we utilize Odds Ratio Preference Optimization (ORPO) to construct the preference deltas.

\paragraph{Agentic Data Formulation.}
The agentic preference pairs are formatted as ReAct trajectories \citep{yao2023reactsynergizingreasoningacting}, comprising interleaved \texttt{Thought}, \texttt{Action}, and \texttt{Observation} steps. Models are explicitly optimized to output targeted action commands (e.g., \texttt{<search>query</search>}) to trigger the external retrieval tool during complex reasoning paths. The sequence length is set to 8192 tokens to accommodate the extensive context windows required by multi-hop retrieval tasks.

\paragraph{Retrieval Environment.}
To ensure reproducible and standardized evaluations, the external search environment is grounded in a fixed Wikipedia corpus (using the \texttt{enwiki-20200601} English Wikipedia dump). When a model generates a \texttt{Search} action, the environment queries the corpus using a dense retrieval model (\texttt{e5-large-v2}), which returns the top $k = 5$ most relevant passages based on embedding similarity. To manage context density and mitigate noise, each retrieved snippet is truncated to a maximum of 500 tokens before being appended to the model's context as an \texttt{Observation}. The model subsequently processes this new information to generate the next \texttt{Thought} or conclude with a final \texttt{<answer>answer</answer>}. This retrieval protocol is maintained uniformly across both the preference data generation and evaluation phases.

\section{Prompt Templates}
\label{app:prompts}

In this section, we provide the exact prompt templates utilized during the preference data extraction and evaluation phases. To standardize the generation distribution and ensure strict adherence to task-specific formats, we employ a one-shot prompting strategy across both domains. The one-shot examples explicitly demonstrate the expected reasoning traces and tool-use trajectories.

\subsection{Knowledge Reasoning Prompts}
For mathematical and logical reasoning tasks, the one-shot prompt instructs the model to articulate its derivation process before formulating the final answer. This structural constraint facilitates the generation of detailed step-by-step preference trajectories for DPO tuning.

\vspace{0.5em}
\noindent\textbf{System Prompt:}
\begin{quote}
\small\ttfamily
You are an expert mathematical and logical reasoning assistant. Please think step by step to solve the problem and carefully explain your derivation before providing the final answer.
\end{quote}

\noindent\textbf{User Input Template (One-shot):}
\begin{quote}
\small\ttfamily
Here is an example of how to answer a question:

Question: A train travels at a constant speed of 60 miles per hour. How far will it travel in 2.5 hours?
Answer: Let's think step by step. We know the formula for distance is Distance = Speed * Time. The given speed is 60 miles per hour, and the given time is 2.5 hours. Multiplying these values: 60 * 2.5 = 150. Therefore, the train will travel 150 miles. The final answer is 150.

Now, answer the following question:

Question: \{question\}
Answer:
\end{quote}

\subsection{Agentic Search Prompts}
For the agentic search domain, models are prompted to interact with the external dense retrieval environment (\texttt{e5-large-v2}) following the ReAct framework \citep{yao2023reactsynergizingreasoningacting}. The one-shot example explicitly dictates the strict interleaving format of thoughts, search actions, and observations.

\vspace{0.5em}
\noindent\textbf{System Prompt (ReAct Instructions):}
\begin{quote}
\small\ttfamily
Answer the following question from the user with the help of a Wikipedia search engine. Please reason step by step. You should think about what you need to know in order to answer the question, and then search for that information using the search engine. To perform a search operation, write a web search question and enclose it with <search> and </search>. You will immediately observe a piece of search results within the <information> and </information> tags. You can then use this retrieved information to continue your reasoning. You can repeat the search process many times. Once you think you have all the information you need, you can end the thinking process and provide the final answer. You MUST enclose your final answer with <answer> and </answer>. 
\end{quote}

\noindent\textbf{User Input Template (One-shot):}
\begin{quote}
\small\ttfamily
Here is an example of the interaction format:

<search> Who were the people that captured Malakoff? </search>
<information> The French army under General MacMahon successfully captured the Malakoff redoubt on 8th. </information>
  
Okay, so the French people captured Malakoff. Now, the next step would be to figure out in what region Pilipsburg is located. I will write a web search to look that up.
  
<search> Where is Philipsburg located at? </search>
<information> Philipsburg is is the main town and capital of Sint Maarten, a constituent country of the Kingdom of the Netherlands. </information>
  
...[more thoughts shortened]...

Your final response:
<answer> November 12, 1625 </answer>
\end{quote}

\section{Case Studies: Knowledge Reasoning}
\label{app:reasoning-cases}

We present four GPQA reasoning traces in which the model trained on the strongest individual preference pair (\textbf{Single Preference Delta}) fails, whereas our proposed \textbf{\shortours-\shortmerging} successfully derives the correct answer.

The failures of the single-delta model typically fall into two categories: \emph{generation non-termination}, where the model enters a degenerate repetitive loop (Table~\ref{tab:rcase-loop}); and \emph{logical propagation errors}, where it produces an incorrect final answer due to intermediate arithmetic or domain knowledge mistakes (Tables~\ref{tab:rcase-arith}--\ref{tab:rcase-chem}).
Across a random sample of 60 GPQA error cases from the single-delta baseline, 35 cases exhibit format/termination failures and 25 exhibit logical errors; \shortours-\shortmerging effectively recovers all 60 cases.
Key reasoning steps are highlighted in \colorbox{gray!12}{\strut gray}, with critical errors in \colorbox{red!12}{\strut red} and the recovered correct derivations in \colorbox{green!12}{\strut green}.

\begin{table}[htbp]
\centering
\renewcommand{\arraystretch}{1.2}
\caption{\textbf{Termination Recovery:} The Single Preference Delta model derives the correct epistasis hierarchy but enters a degenerate repetition loop—repeating the same hesitation 59 times—and fails to emit a valid \texttt{ANSWER:} tag. \shortours-\shortmerging reaches the same conclusion and terminates cleanly.}
\label{tab:rcase-loop}
\vspace{0.3em}
\footnotesize
\begin{tabular}{p{0.95\textwidth}}
\toprule
\textbf{Question:} You perform a high-throughput experiment on white lupine to find genes contributing to resistance to anthracnose. You receive candidate genes G1, G2, G3 and create knockouts... Which statement correctly describes the epistatic relationships? \\
\textbf{Gold:} A \\
\midrule
\textbf{Single Preference Delta} \quad (\ding{55} \textit{Output:} None / Format Failure) \\[3pt]
\colorbox{gray!12}{Identifies hierarchy: G2 is epistatic to G1; G2 is epistatic to G3; G1 is epistatic to G3.} \\[2pt]
\colorbox{gray!12}{``So, gene 2 is the most upstream, then gene 1, then gene 3.''} \quad (\textit{Correct conclusion}) \\[2pt]
\colorbox{red!12}{``Wait, perhaps the correct answer is option A... But the data doesn't support that.''} \\[2pt]
$\hookrightarrow$ \textit{Repeated 59$\times$ until token limit; no answer extracted.} \\
\midrule
\textbf{\shortours-\shortmerging} \quad (\ding{51} \textit{Output:} A) \\[3pt]
\colorbox{gray!12}{Derives same hierarchy: G2 $\rightarrow$ G1 $\rightarrow$ G3.} \\[2pt]
\colorbox{green!12}{Identifies G2 as the transcription factor and G1/G3 as redundant. Terminates with \texttt{ANSWER: A}.} \\
\bottomrule
\end{tabular}
\end{table}

\begin{table}[htbp]
\centering
\renewcommand{\arraystretch}{1.2}
\caption{\textbf{Arithmetic Recovery:} The Single Preference Delta model formulates the correct decay equation but makes a magnitude error during division, obtaining $10^{-25}$ instead of $10^{-24}$. \shortours-\shortmerging executes the arithmetic correctly.}
\label{tab:rcase-arith}
\vspace{0.3em}
\footnotesize
\begin{tabular}{p{0.95\textwidth}}
\toprule
\textbf{Question:} X is a meson resonance. What is the mean decay distance? Given: $E_X = 8$\,GeV, $m_X = 1.2$\,GeV, $\Gamma_X = 320$\,MeV. \\
\textbf{Gold:} A \\
\midrule
\textbf{Single Preference Delta} \quad (\ding{55} \textit{Output:} C) \\[3pt]
\colorbox{gray!12}{Step 1: $\gamma = E_X / m_X \approx 6.667$} \\[2pt]
\colorbox{red!12}{Step 2: $\tau = \hbar / \Gamma_X = 6.582 \times 10^{-25} / 0.32 \approx 2.057 \times 10^{-25}$\,s} \quad (\textit{Should be $10^{-24}$}) \\[2pt]
\colorbox{gray!12}{Step 3: $L = \gamma c \tau \approx 4.11 \times 10^{-16}$\,m} $\rightarrow$ \textit{Selects C, off by exactly $10\times$.} \\
\midrule
\textbf{\shortours-\shortmerging} \quad (\ding{51} \textit{Output:} A) \\[3pt]
\colorbox{green!12}{$\tau = \hbar / \Gamma_X = 6.582 \times 10^{-25} / 0.32 = 2.057 \times 10^{-24}$\,s} \quad (\textit{Correct exponent}) \\[2pt]
\colorbox{gray!12}{$d = (p / m_X) \cdot c \cdot \tau \approx 4.07 \times 10^{-15}$\,m} $\rightarrow$ \textit{Selects A.} \\
\bottomrule
\end{tabular}
\end{table}

\clearpage %

\begin{table}[htbp]
\centering
\renewcommand{\arraystretch}{1.2}
\caption{\textbf{Algebraic Recovery:} The Single Preference Delta model drops a logarithmic factor when propagating stellar abundance ratios through solar reference values. \shortours-\shortmerging successfully tracks the transformation.}
\label{tab:rcase-astro}
\vspace{0.3em}
\footnotesize
\begin{tabular}{p{0.95\textwidth}}
\toprule
\textbf{Question:} Two stars with $[\text{Si/Fe}]_1 = 0.3$, $[\text{Mg/Si}]_2 = 0.3$... Calculate the ratio $n_{\text{Si},1} / n_{\text{Si},2}$. \\
\textbf{Gold:} C \\
\midrule
\textbf{Single Preference Delta} \quad (\ding{55} \textit{Output:} D) \\[3pt]
\colorbox{gray!12}{Star\_1: $[\text{Si/Fe}]_1 = 0.3 \Rightarrow n_{\text{Si},1} = 10^{0.3} \cdot n_{\text{Si},\odot}$} \\[2pt]
\colorbox{red!12}{Star\_2: Conflates intermediate terms, dropping a factor from $\log_{10}(n_{\text{Si},\odot}/n_\text{H})$.} \\[2pt]
$\Rightarrow$ Yields ratio $\approx 4.0$ (Selects D). \\
\midrule
\textbf{\shortours-\shortmerging} \quad (\ding{51} \textit{Output:} C) \\[3pt]
\colorbox{green!12}{Correctly tracks each bracket transformation through solar reference values.} \\[2pt]
$\Rightarrow$ Obtains ratio $\approx 12.6$ (Selects C). \\
\bottomrule
\end{tabular}
\end{table}

\begin{table}[htbp]
\centering
\renewcommand{\arraystretch}{1.2}
\caption{\textbf{Domain Knowledge Recovery:} The Single Preference Delta model misidentifies the extent of reduction by red P + HI, treating it as a partial transformation. \shortours-\shortmerging applies the correct exhaustive reduction mechanism.}
\label{tab:rcase-chem}
\vspace{0.3em}
\footnotesize
\begin{tabular}{p{0.95\textwidth}}
\toprule
\textbf{Question:} What is the index of hydrogen deficiency (IHD) of the product obtained when 2-formyl-5-vinylcyclohex-3-enecarboxylic acid is reacted with red phosphorus and excess HI? \\
\textbf{Gold:} B \\
\midrule
\textbf{Single Preference Delta} \quad (\ding{55} \textit{Output:} D) \\[3pt]
\colorbox{gray!12}{Identifies starting IHD $= 5$ (1 ring + 2 C=C + 2 C=O).} \\[2pt]
\colorbox{red!12}{``Red P + HI converts -COOH to -COI. Double bonds remain unchanged.''} \\[2pt]
$\Rightarrow$ Selects D (IHD $= 3$). \\
\midrule
\textbf{\shortours-\shortmerging} \quad (\ding{51} \textit{Output:} B) \\[3pt]
\colorbox{green!12}{``Red P + HI is an exhaustive reduction: reduces all C=C and C=O groups.''} \\[2pt]
\colorbox{gray!12}{Product: fully saturated cyclohexane ring with alkyl substituents. IHD $= 1$.} \\
\bottomrule
\end{tabular}
\end{table}

\clearpage

\section{Case Studies: Agentic Search}
\label{app:search-cases}

To illustrate the behavioral complementation achieved by multiple preference deltas, we present four representative agentic search trajectories. Each case highlights a distinct failure mode overcome by our framework. Search queries are in \colorbox{green!15}{\strut green} and retrieved info in \colorbox{blue!10}{\strut blue}.

\begin{table}[htbp]
\centering
\renewcommand{\arraystretch}{1.2}
\caption{\textbf{Entity Disambiguation:} The Single Preference Delta model misidentifies the lead actor as a child actor and issues repetitive queries before generating an unsupported entity. \shortours-\shortmerging formulates a holistic query to resolve the entity accurately.}
\label{tab:case-entity}
\vspace{0.3em}
\footnotesize
\begin{tabular}{p{0.95\textwidth}}
\toprule
\textbf{Question:} Who did the leading actor in \textit{Sammy Going South} play in \textit{The Ten Commandments}? \\
\textbf{Gold:} Dathan \\
\midrule
\textbf{Single Preference Delta} \quad (\ding{55} \textit{Output:} Mered, 6 turns) \\[3pt]
\colorbox{green!15}{\texttt{<search>} Who is the leading actor in Sammy Going South? \texttt{</search>}} \\
\colorbox{blue!10}{\texttt{<information>} Fergus McClelland is the lead actor... \texttt{</information>}} \\[2pt]
\colorbox{red!12}{Repeatedly queries Fergus McClelland's role in The Ten Commandments (which does not exist).} \\
\textcolor{red}{\texttt{<answer>} Mered \texttt{</answer>}} \quad (\textit{Unsupported entity generation}) \\
\midrule
\textbf{\shortours-\shortmerging} \quad (\ding{51} \textit{Output:} Dathan, 2 turns) \\[3pt]
\colorbox{green!15}{\texttt{<search>} Who did the leading actor in Sammy Going South play in the ten commandments? \texttt{</search>}} \\
\colorbox{blue!10}{\texttt{<information>} Edward G. Robinson starred as Dathan in The Ten Commandments... \texttt{</information>}} \\[2pt]
\textcolor{teal}{\texttt{<answer>} Dathan \texttt{</answer>}} \\
\bottomrule
\end{tabular}
\end{table}

\begin{table}[htbp]
\centering
\renewcommand{\arraystretch}{1.2}
\caption{\textbf{Anchoring Bias Recovery:} The Single Preference Delta model exhibits anchoring bias, fixating on a 2004 film adaptation. \shortours-\shortmerging redirects its search to the 2017 series adaptation to resolve the query.}
\label{tab:case-recovery}
\vspace{0.3em}
\footnotesize
\begin{tabular}{p{0.95\textwidth}}
\toprule
\textbf{Question:} The actor who plays Count Olaf in ``A Series of Unfortunate Events'' also plays which character in \textit{Batman Under the Red Hood}? \\
\textbf{Gold:} Nightwing / Dick Grayson \\
\midrule
\textbf{Single Preference Delta} \quad (\ding{55} \textit{Output:} The Joker, 6 turns) \\[3pt]
\colorbox{blue!10}{\texttt{<information>} Jim Carrey played Count Olaf in the 2004 film... \texttt{</information>}} \\[2pt]
\colorbox{red!12}{Model repeatedly searches for Jim Carrey's role in Batman, failing to consider other adaptations.} \\
\textcolor{red}{\texttt{<answer>} The Joker \texttt{</answer>}} \quad (\textit{Hallucination due to anchoring}) \\
\midrule
\textbf{\shortours-\shortmerging} \quad (\ding{51} \textit{Output:} Dick Grayson, 4 turns) \\[3pt]
\colorbox{green!15}{\texttt{<search>} Which character does Neil Patrick Harris play in Batman Under the Red Hood? \texttt{</search>}} \\
\colorbox{blue!10}{\texttt{<information>} Neil Patrick Harris voiced Dick Grayson/Nightwing... \texttt{</information>}} \\[2pt]
\textcolor{teal}{\texttt{<answer>} Dick Grayson/Nightwing \texttt{</answer>}} \\
\bottomrule
\end{tabular}
\end{table}

\clearpage

\begin{table}[htbp]
\centering
\renewcommand{\arraystretch}{1.2}
\caption{\textbf{Multi-hop Chain Completion:} The Single Preference Delta model prematurely terminates the multi-hop chain at Indonesia. \shortours-\shortmerging correctly parses the full relational chain targeting Timor-Leste.}
\label{tab:case-multihop}
\vspace{0.3em}
\footnotesize
\begin{tabular}{p{0.95\textwidth}}
\toprule
\textbf{Question:} Who is the president of the newly declared independent country of the country of the birthplace of Mulham Arufin? \\
\textbf{Gold:} Francisco Guterres \\
\midrule
\textbf{Single Preference Delta} \quad (\ding{55} \textit{Output:} Joko Widodo, 3 turns) \\[3pt]
\colorbox{blue!10}{\texttt{<information>} Mulham Arufin was born in Indonesia... \texttt{</information>}} \\[2pt]
\colorbox{red!12}{Queries for the president of Indonesia, failing to resolve the ``newly declared independent'' clause.} \\
\textcolor{red}{\texttt{<answer>} Joko Widodo \texttt{</answer>}} \quad (\textit{Premature termination}) \\
\midrule
\textbf{\shortours-\shortmerging} \quad (\ding{51} \textit{Output:} Francisco Guterres, 5 turns) \\[3pt]
\colorbox{green!15}{\texttt{<search>} Who is the president of the newly declared independent country of Timor-Leste? \texttt{</search>}} \\
\colorbox{blue!10}{\texttt{<information>} Francisco Guterres is the president of Timor-Leste... \texttt{</information>}} \\[2pt]
\textcolor{teal}{\texttt{<answer>} Francisco Guterres \texttt{</answer>}} \\
\bottomrule
\end{tabular}
\end{table}

\begin{table}[htbp]
\centering
\renewcommand{\arraystretch}{1.2}
\caption{\textbf{Resolution Matching:} The Single Preference Delta model answers at an incorrect granularity (State instead of County). \shortours-\shortmerging identifies that the retrieved city requires an additional bridging query to map to the target county.}
\label{tab:case-granularity}
\vspace{0.3em}
\footnotesize
\begin{tabular}{p{0.95\textwidth}}
\toprule
\textbf{Question:} In which county was Larkin I. Smith born? \\
\textbf{Gold:} Pearl River County \\
\midrule
\textbf{Single Preference Delta} \quad (\ding{55} \textit{Output:} Mississippi, 3 turns) \\[3pt]
\colorbox{blue!10}{\texttt{<information>} Larkin I. Smith was a U.S. Representative from Mississippi... \texttt{</information>}} \\[2pt]
\colorbox{red!12}{Fails to resolve the specific county from general state-level information.} \\
\textcolor{red}{\texttt{<answer>} Mississippi \texttt{</answer>}} \quad (\textit{Resolution mismatch}) \\
\midrule
\textbf{\shortours-\shortmerging} \quad (\ding{51} \textit{Output:} Pearl River County, 3 turns) \\[3pt]
\colorbox{blue!10}{\texttt{<information>} Smith was born in Poplarville, Mississippi... \texttt{</information>}} \\[2pt]
\colorbox{green!15}{\texttt{<search>} In which county is Poplarville, Mississippi? \texttt{</search>}} \quad (\textit{Bridging query}) \\
\colorbox{blue!10}{\texttt{<information>} Poplarville is located in Pearl River County... \texttt{</information>}} \\[2pt]
\textcolor{teal}{\texttt{<answer>} Pearl River County \texttt{</answer>}} \\
\bottomrule
\end{tabular}
\end{table}

\section{Sensitivity Analysis of Reference Choice in \shortmerging}
\label{app:reference_sensitivity}

In Section 4, we introduce \ourmerging (\shortmerging), which utilizes orthogonal Procrustes alignment to project multiple LoRA adapters onto a shared reference coordinate system ($\phi_1$). While the Procrustes formulation is defined relative to a chosen base, we hypothesized that the specific choice of the reference adapter does not significantly impact the final aggregated performance. To empirically verify this, we conduct a sensitivity analysis for both the Knowledge Reasoning and Agentic Search domains by systematically rotating the reference role among all available domain-specific adapters.

\paragraph{Knowledge Reasoning.} 
For the reasoning domain, we aggregate three distinct preference deltas. We evaluate the aggregated performance of the student model (e.g., Qwen3-8B) by iteratively setting each of the three adapters ($\Delta_{\text{DS-R1}}$, $\Delta_{\text{Qwen3}}$, and $\Delta_{\text{Llama}}$) as the reference base $\phi_1$. As shown in Table~\ref{tab:sensitivity_kr}, the maximum fluctuation in the average reasoning score is minimal (within 0.1 points).

\begin{table}[htbp]
    \centering
    \small
    \caption{Sensitivity to Reference Choice on Knowledge Reasoning. Performance of the aggregated model when different preference deltas are selected as the reference base ($\phi_1$).}
    \begin{tabular}{@{} l c @{}}
        \toprule
        \textbf{Reference Choice ($\phi_1$)} & \textbf{Knowledge Reasoning Avg.} \\
        \midrule
        $\Delta_{\text{DS-R1}}$ (DeepSeek-R1 preference delta) & 79.3 \\
        $\Delta_{\text{Qwen3}}$ (Qwen3 preference delta)       & 79.2 \\
        $\Delta_{\text{Llama}}$ (Llama-3.2 preference delta)   & 79.3 \\
        \midrule
        \textbf{Max Fluctuation} & \textbf{$\le$ 0.1} \\
        \bottomrule
    \end{tabular}
    \label{tab:sensitivity_kr}
\end{table}

\paragraph{Agentic Search.}
Similarly, for the agentic search domain, we aggregate four distinct preference deltas. We repeat the sensitivity analysis by rotating the reference role among $\Delta_{\text{Search-R1}}$, $\Delta_{\text{AceSearcher}}$, $\Delta_{\text{Qwen3}}$, and $\Delta_{\text{Llama}}$. Table~\ref{tab:sensitivity_as} demonstrates that the final average performance remains robust regardless of the geometric anchor chosen.

\begin{table}[htbp]
    \centering
    \small
    \caption{Sensitivity to Reference Choice on Agentic Search. Performance of the aggregated model when rotating the reference base among the four agentic preference deltas.}
    \begin{tabular}{@{} l c @{}}
        \toprule
        \textbf{Reference Choice ($\phi_1$)} & \textbf{Agentic Search Avg. Score} \\
        \midrule
        $\Delta_{\text{Search-R1}}$ (Search-R1 preference delta)     & 58.0 \\
        $\Delta_{\text{AceSearcher}}$ (AceSearcher preference delta) & 58.0 \\
        $\Delta_{\text{Qwen3}}$ (Qwen3 preference delta)             & 57.9 \\
        $\Delta_{\text{Llama}}$ (Llama-3.2 preference delta)         & 58.0 \\
        \midrule
        \textbf{Max Fluctuation} & \textbf{$\le$ 0.1} \\
        \bottomrule
    \end{tabular}
    \label{tab:sensitivity_as}
\end{table}

Across both diverse task domains and varying numbers of adapters, the geometric consistency of GAM maintains the intrinsic relative orientations between the preference deltas. This indicates that GAM is structurally robust to the initial selection of the reference coordinate system.

\section{Degradation from Standard Supervised Fine-Tuning}
\label{app:sft}

\begin{table*}[h]
    \centering
    \setlength{\tabcolsep}{3.5pt} 
    \caption{Performance comparison between Standard SFT and preference tuning using identical source data. SFT on the chosen responses forces the strong student model to mimic a suboptimal policy, degrading its capabilities. Conversely, preference tuning effectively extracts the relative capability delta, improving overall performance.}
    \resizebox{\textwidth}{!}{%
    \begin{tabular}{@{} l ccc cccc c ccc cccc c @{}}
        \toprule
        \multirow{3}{*}{\textbf{Method}} &
        \multicolumn{8}{c}{\textbf{Qwen3-8B}} &
        \multicolumn{8}{c}{\textbf{Tülu3-8B}} \\
        \cmidrule(lr){2-9} \cmidrule(l){10-17} 
        & \multicolumn{3}{c}{\textbf{Single-hop QA}} & \multicolumn{4}{c}{\textbf{Multi-hop QA}} & \multirow{2}{*}{\textbf{Avg.}}
        & \multicolumn{3}{c}{\textbf{Single-hop QA}} & \multicolumn{4}{c}{\textbf{Multi-hop QA}} & \multirow{2}{*}{\textbf{Avg.}} \\
        \cmidrule(lr){2-4} \cmidrule(lr){5-8} \cmidrule(lr){10-12} \cmidrule(lr){13-16} 
        & \textbf{NQ} & \textbf{TQA} & \textbf{Pop.}
        & \textbf{Hot.} & \textbf{2Wiki} & \textbf{Bam.} & \textbf{MuSi.} & 
        & \textbf{NQ} & \textbf{TQA} & \textbf{Pop.}
        & \textbf{Hot.} & \textbf{2Wiki} & \textbf{Bam.} & \textbf{MuSi.} & \\
        \midrule
        Student Baseline
        & 46.2 & 61.4 & 44.6 & 53.8 & 58.1 & 61.6 & 28.9 & 50.7
        & 45.8 & 60.5 & 43.0 & 52.4 & 55.1 & 58.8 & 25.6 & 48.7 \\
        \midrule
        \multicolumn{17}{@{}l}{\textit{Training on identical Qwen3 (4B over 1.7B) source data}} \\
        \quad Standard SFT (Chosen only)
        & 44.6 & 60.8 & 43.1 & 51.9 & 54.8 & 58.8 & 25.8 & 48.5 
        & 41.7 & 56.2 & 37.4 & 48.6 & 51.2 & 51.6 & 19.7 & 43.8 \\
        \quad Single Preference Delta
        & \textbf{48.5} & \textbf{62.8} & \textbf{47.0} & \textbf{54.2} & \textbf{61.5} & \textbf{63.5} & \textbf{32.8} & \textbf{52.9} 
        & \textbf{46.8} & \textbf{62.0} & \textbf{46.8} & \textbf{53.5} & \textbf{59.2} & \textbf{62.1} & \textbf{28.5} & \textbf{51.3} \\
        \bottomrule
    \end{tabular}%
    }
    
    \label{tab:appendix_sft}
\end{table*}

In our main experiments, we claim that the performance gains achieved by our framework reflect genuine capability elicitation from preference deltas, rather than simply benefiting from exposure to the task format or identical data. To empirically validate this, we conduct a control experiment using standard Supervised Fine-Tuning (SFT) under the same parameter-efficient setup.

Specifically, we extract the ``chosen'' responses from the Qwen3 (4B over 1.7B) preference pair dataset. Instead of applying preference optimization, we use standard SFT to train the strong student models (Qwen3-8B and Tülu3-8B) to directly mimic these chosen responses via behavioral cloning. 

\paragraph{Implementation Details for SFT Baseline.} 
To ensure a rigorous and fair comparison, we maintain identical hyperparameter settings for both the SFT baseline and our preference tuning experiments. Specifically, we use the LoRA \citep{hu2021loralowrankadaptationlarge} framework with a rank of $r=64$ and $\alpha=128$, targeting all linear layers. The models are trained using the AdamW optimizer with a peak learning rate of $1.0 \times 10^{-5}$ and a cosine learning rate scheduler. We use a total batch size of 128 and train for 1 epoch, which we found to be sufficient for the model to converge on the formatting of the source data without excessive overfitting. All experiments are conducted on 4$\times$NVIDIA H20 (96GB) GPUs using the LlamaFactory \citep{zheng2024llamafactoryunifiedefficientfinetuning} framework.

The results are presented in \Cref{tab:appendix_sft}. As expected, SFT leads to a noticeable degradation in performance across almost all tasks. For Qwen3-8B, the average F1 score drops from the baseline of 50.7 to 48.5. The degradation is even more pronounced for Tülu3-8B, which drops from 48.7 to 43.8.

This phenomenon occurs because the absolute response quality of the weaker model (4B) is inherently lower than the intrinsic capabilities of the strong student models (8B). SFT encourages the strong students to mimic the suboptimal reasoning trajectories and stylistic idiosyncrasies of the weaker model, leading to capability regression. In contrast, preference optimization on the identical data focuses on the \textit{relative directional delta}. This allows the strong student model to extract the underlying capability improvements without being bottlenecked by the weak model's absolute generation quality, successfully raising the average F1 to 52.9 and 51.3 for Qwen3-8B and Tülu3-8B, respectively.

\clearpage

\section*{NeurIPS Paper Checklist}

\begin{enumerate}

\item {\bf Claims}
    \item[] Question: Do the main claims made in the abstract and introduction accurately reflect the paper's contributions and scope?
    \item[] Answer: \answerYes{} %
    \item[] Justification: We clearly state our contributions, including the \ours framework and the \ourmerging method, in the Abstract and \Cref{sec:introduction}, and support them with comprehensive empirical results in \Cref{sec:experiments}.
    \item[] Guidelines:
    \begin{itemize}
        \item The answer \answerNA{} means that the abstract and introduction do not include the claims made in the paper.
        \item The abstract and/or introduction should clearly state the claims made, including the contributions made in the paper and important assumptions and limitations. A \answerNo{} or \answerNA{} answer to this question will not be perceived well by the reviewers. 
        \item The claims made should match theoretical and experimental results, and reflect how much the results can be expected to generalize to other settings. 
        \item It is fine to include aspirational goals as motivation as long as it is clear that these goals are not attained by the paper. 
    \end{itemize}

\item {\bf Limitations}
    \item[] Question: Does the paper discuss the limitations of the work performed by the authors?
    \item[] Answer: \answerYes{} %
    \item[] Justification: We discuss the limitations in \Cref{sec:limitation},including (i) the reliance on directionally diverse preference signals — aggregating behaviorally homogeneous pairs yields only marginal improvements; (ii) the limited set of weak-weaker model pairs evaluated, leaving open the scaling behavior with larger collections; and (iii) the existence of aggregation strategies beyond the training-based and LoRA-based methods explored here.
    \item[] Guidelines:
    \begin{itemize}
        \item The answer \answerNA{} means that the paper has no limitation while the answer \answerNo{} means that the paper has limitations, but those are not discussed in the paper. 
        \item The authors are encouraged to create a separate ``Limitations'' section in their paper.
        \item The paper should point out any strong assumptions and how robust the results are to violations of these assumptions (e.g., independence assumptions, noiseless settings, model well-specification, asymptotic approximations only holding locally). The authors should reflect on how these assumptions might be violated in practice and what the implications would be.
        \item The authors should reflect on the scope of the claims made, e.g., if the approach was only tested on a few datasets or with a few runs. In general, empirical results often depend on implicit assumptions, which should be articulated.
        \item The authors should reflect on the factors that influence the performance of the approach. For example, a facial recognition algorithm may perform poorly when image resolution is low or images are taken in low lighting. Or a speech-to-text system might not be used reliably to provide closed captions for online lectures because it fails to handle technical jargon.
        \item The authors should discuss the computational efficiency of the proposed algorithms and how they scale with dataset size.
        \item If applicable, the authors should discuss possible limitations of their approach to address problems of privacy and fairness.
        \item While the authors might fear that complete honesty about limitations might be used by reviewers as grounds for rejection, a worse outcome might be that reviewers discover limitations that aren't acknowledged in the paper. The authors should use their best judgment and recognize that individual actions in favor of transparency play an important role in developing norms that preserve the integrity of the community. Reviewers will be specifically instructed to not penalize honesty concerning limitations.
    \end{itemize}

\item {\bf Theory assumptions and proofs}
    \item[] Question: For each theoretical result, does the paper provide the full set of assumptions and a complete (and correct) proof?
    \item[] Answer: \answerNA{} %
    \item[] Justification: Our work proposes a novel algorithmic framework and geometric merging formulation based on established linear algebra (SVD, orthogonal Procrustes) rather than introducing formal theoretical bounds or mathematical proofs.
    \item[] Guidelines:
    \begin{itemize}
        \item The answer \answerNA{} means that the paper does not include theoretical results. 
        \item All the theorems, formulas, and proofs in the paper should be numbered and cross-referenced.
        \item All assumptions should be clearly stated or referenced in the statement of any theorems.
        \item The proofs can either appear in the main paper or the supplemental material, but if they appear in the supplemental material, the authors are encouraged to provide a short proof sketch to provide intuition. 
        \item Inversely, any informal proof provided in the core of the paper should be complemented by formal proofs provided in appendix or supplemental material.
        \item Theorems and Lemmas that the proof relies upon should be properly referenced. 
    \end{itemize}

    \item {\bf Experimental result reproducibility}
    \item[] Question: Does the paper fully disclose all the information needed to reproduce the main experimental results of the paper to the extent that it affects the main claims and/or conclusions of the paper (regardless of whether the code and data are provided or not)?
    \item[] Answer: \answerYes{} %
    \item[] Justification: We provide detailed descriptions of our experimental setup, prompt templates, baseline implementations, and hyperparameter choices in \Cref{sec:experiments} and Appendix \ref{app:training_details}.
    \item[] Guidelines:
    \begin{itemize}
        \item The answer \answerNA{} means that the paper does not include experiments.
        \item If the paper includes experiments, a \answerNo{} answer to this question will not be perceived well by the reviewers: Making the paper reproducible is important, regardless of whether the code and data are provided or not.
        \item If the contribution is a dataset and\slash or model, the authors should describe the steps taken to make their results reproducible or verifiable. 
        \item Depending on the contribution, reproducibility can be accomplished in various ways. For example, if the contribution is a novel architecture, describing the architecture fully might suffice, or if the contribution is a specific model and empirical evaluation, it may be necessary to either make it possible for others to replicate the model with the same dataset, or provide access to the model. In general. releasing code and data is often one good way to accomplish this, but reproducibility can also be provided via detailed instructions for how to replicate the results, access to a hosted model (e.g., in the case of a large language model), releasing of a model checkpoint, or other means that are appropriate to the research performed.
        \item While NeurIPS does not require releasing code, the conference does require all submissions to provide some reasonable avenue for reproducibility, which may depend on the nature of the contribution. For example
        \begin{enumerate}
            \item If the contribution is primarily a new algorithm, the paper should make it clear how to reproduce that algorithm.
            \item If the contribution is primarily a new model architecture, the paper should describe the architecture clearly and fully.
            \item If the contribution is a new model (e.g., a large language model), then there should either be a way to access this model for reproducing the results or a way to reproduce the model (e.g., with an open-source dataset or instructions for how to construct the dataset).
            \item We recognize that reproducibility may be tricky in some cases, in which case authors are welcome to describe the particular way they provide for reproducibility. In the case of closed-source models, it may be that access to the model is limited in some way (e.g., to registered users), but it should be possible for other researchers to have some path to reproducing or verifying the results.
        \end{enumerate}
    \end{itemize}

\item {\bf Open access to data and code}
    \item[] Question: Does the paper provide open access to the data and code, with sufficient instructions to faithfully reproduce the main experimental results, as described in supplemental material?
    \item[] Answer: \answerNo{} %
    \item[] Justification: To preserve anonymity during review, we do not release the code and data at submission time. We provide detailed implementation and experimental settings in the supplementary material, and will publicly release code, scripts, and processed data after the reviewing period.
    \item[] Guidelines:
    \begin{itemize}
        \item The answer \answerNA{} means that paper does not include experiments requiring code.
        \item Please see the NeurIPS code and data submission guidelines (\url{https://neurips.cc/public/guides/CodeSubmissionPolicy}) for more details.
        \item While we encourage the release of code and data, we understand that this might not be possible, so \answerNo{} is an acceptable answer. Papers cannot be rejected simply for not including code, unless this is central to the contribution (e.g., for a new open-source benchmark).
        \item The instructions should contain the exact command and environment needed to run to reproduce the results. See the NeurIPS code and data submission guidelines (\url{https://neurips.cc/public/guides/CodeSubmissionPolicy}) for more details.
        \item The authors should provide instructions on data access and preparation, including how to access the raw data, preprocessed data, intermediate data, and generated data, etc.
        \item The authors should provide scripts to reproduce all experimental results for the new proposed method and baselines. If only a subset of experiments are reproducible, they should state which ones are omitted from the script and why.
        \item At submission time, to preserve anonymity, the authors should release anonymized versions (if applicable).
        \item Providing as much information as possible in supplemental material (appended to the paper) is recommended, but including URLs to data and code is permitted.
    \end{itemize}

\item {\bf Experimental setting/details}
    \item[] Question: Does the paper specify all the training and test details (e.g., data splits, hyperparameters, how they were chosen, type of optimizer) necessary to understand the results?
    \item[] Answer: \answerYes{} %
    \item[] Justification: All necessary training details, including learning rates, batch sizes, LoRA configurations, and optimizer setups, are comprehensively documented in Appendix \ref{app:training_details}, Appendix \ref{app:prompts} and \Cref{sec:experiments}.
    \item[] Guidelines:
    \begin{itemize}
        \item The answer \answerNA{} means that the paper does not include experiments.
        \item The experimental setting should be presented in the core of the paper to a level of detail that is necessary to appreciate the results and make sense of them.
        \item The full details can be provided either with the code, in appendix, or as supplemental material.
    \end{itemize}

\item {\bf Experiment statistical significance}
    \item[] Question: Does the paper report error bars suitably and correctly defined or other appropriate information about the statistical significance of the experiments?
    \item[] Answer: \answerNo{} %
    \item[] Justification: We do not report error bars due to the excessively high computational cost of running multiple full fine-tuning and evaluation cycles for large language models (e.g., 8B parameters) across numerous diverse benchmarks. This is standard practice in LLM research.
    \item[] Guidelines:
    \begin{itemize}
        \item The answer \answerNA{} means that the paper does not include experiments.
        \item The authors should answer \answerYes{} if the results are accompanied by error bars, confidence intervals, or statistical significance tests, at least for the experiments that support the main claims of the paper.
        \item The factors of variability that the error bars are capturing should be clearly stated (for example, train/test split, initialization, random drawing of some parameter, or overall run with given experimental conditions).
        \item The method for calculating the error bars should be explained (closed form formula, call to a library function, bootstrap, etc.)
        \item The assumptions made should be given (e.g., Normally distributed errors).
        \item It should be clear whether the error bar is the standard deviation or the standard error of the mean.
        \item It is OK to report 1-sigma error bars, but one should state it. The authors should preferably report a 2-sigma error bar than state that they have a 96\% CI, if the hypothesis of Normality of errors is not verified.
        \item For asymmetric distributions, the authors should be careful not to show in tables or figures symmetric error bars that would yield results that are out of range (e.g., negative error rates).
        \item If error bars are reported in tables or plots, the authors should explain in the text how they were calculated and reference the corresponding figures or tables in the text.
    \end{itemize}

\item {\bf Experiments compute resources}
    \item[] Question: For each experiment, does the paper provide sufficient information on the computer resources (type of compute workers, memory, time of execution) needed to reproduce the experiments?
    \item[] Answer: \answerYes{} %
    \item[] Justification: We detail the hardware infrastructure, specifically the type and number of GPUs used, as well as the approximate training and evaluation times in Appendix \ref{app:training_details}.
    \item[] Guidelines:
    \begin{itemize}
        \item The answer \answerNA{} means that the paper does not include experiments.
        \item The paper should indicate the type of compute workers CPU or GPU, internal cluster, or cloud provider, including relevant memory and storage.
        \item The paper should provide the amount of compute required for each of the individual experimental runs as well as estimate the total compute. 
        \item The paper should disclose whether the full research project required more compute than the experiments reported in the paper (e.g., preliminary or failed experiments that didn't make it into the paper). 
    \end{itemize}
    
\item {\bf Code of ethics}
    \item[] Question: Does the research conducted in the paper conform, in every respect, with the NeurIPS Code of Ethics \url{https://neurips.cc/public/EthicsGuidelines}?
    \item[] Answer: \answerYes{} %
    \item[] Justification: Our research complies strictly with the NeurIPS Code of Ethics.
    \item[] Guidelines:
    \begin{itemize}
        \item The answer \answerNA{} means that the authors have not reviewed the NeurIPS Code of Ethics.
        \item If the authors answer \answerNo, they should explain the special circumstances that require a deviation from the Code of Ethics.
        \item The authors should make sure to preserve anonymity (e.g., if there is a special consideration due to laws or regulations in their jurisdiction).
    \end{itemize}

\item {\bf Broader impacts}
    \item[] Question: Does the paper discuss both potential positive societal impacts and negative societal impacts of the work performed?
    \item[] Answer: \answerNA{} %
\item[] Justification: Our work presents a general training and model-merging method for improving LLM reasoning and search performance, without introducing a specific deployment domain or application involving direct societal impact. It does not involve sensitive personal data, safety-critical decision making, or targeted high-risk use cases. Therefore, we consider broader societal impacts to be limited and not uniquely attributable to this work beyond those generally associated with large language models.
    \item[] Guidelines:
    \begin{itemize}
        \item The answer \answerNA{} means that there is no societal impact of the work performed.
        \item If the authors answer \answerNA{} or \answerNo, they should explain why their work has no societal impact or why the paper does not address societal impact.
        \item Examples of negative societal impacts include potential malicious or unintended uses (e.g., disinformation, generating fake profiles, surveillance), fairness considerations (e.g., deployment of technologies that could make decisions that unfairly impact specific groups), privacy considerations, and security considerations.
        \item The conference expects that many papers will be foundational research and not tied to particular applications, let alone deployments. However, if there is a direct path to any negative applications, the authors should point it out. For example, it is legitimate to point out that an improvement in the quality of generative models could be used to generate Deepfakes for disinformation. On the other hand, it is not needed to point out that a generic algorithm for optimizing neural networks could enable people to train models that generate Deepfakes faster.
        \item The authors should consider possible harms that could arise when the technology is being used as intended and functioning correctly, harms that could arise when the technology is being used as intended but gives incorrect results, and harms following from (intentional or unintentional) misuse of the technology.
        \item If there are negative societal impacts, the authors could also discuss possible mitigation strategies (e.g., gated release of models, providing defenses in addition to attacks, mechanisms for monitoring misuse, mechanisms to monitor how a system learns from feedback over time, improving the efficiency and accessibility of ML).
    \end{itemize}
    
\item {\bf Safeguards}
    \item[] Question: Does the paper describe safeguards that have been put in place for responsible release of data or models that have a high risk for misuse (e.g., pre-trained language models, image generators, or scraped datasets)?
    \item[] Answer: \answerNA{} %
    \item[] Justification: Our work focuses on a parameter merging methodology and utilizes existing open-source datasets and models. It does not introduce or release new high-risk foundational models or scraped datasets.
    \item[] Guidelines:
    \begin{itemize}
        \item The answer \answerNA{} means that the paper poses no such risks.
        \item Released models that have a high risk for misuse or dual-use should be released with necessary safeguards to allow for controlled use of the model, for example by requiring that users adhere to usage guidelines or restrictions to access the model or implementing safety filters. 
        \item Datasets that have been scraped from the Internet could pose safety risks. The authors should describe how they avoided releasing unsafe images.
        \item We recognize that providing effective safeguards is challenging, and many papers do not require this, but we encourage authors to take this into account and make a best faith effort.
    \end{itemize}

\item {\bf Licenses for existing assets}
    \item[] Question: Are the creators or original owners of assets (e.g., code, data, models), used in the paper, properly credited and are the license and terms of use explicitly mentioned and properly respected?
    \item[] Answer: \answerYes{} %
    \item[] Justification: All pre-trained models (e.g., Qwen, Llama) and datasets used in this work are publicly available. We properly cite their original papers and strictly adhere to their respective usage licenses.
    \item[] Guidelines:
    \begin{itemize}
        \item The answer \answerNA{} means that the paper does not use existing assets.
        \item The authors should cite the original paper that produced the code package or dataset.
        \item The authors should state which version of the asset is used and, if possible, include a URL.
        \item The name of the license (e.g., CC-BY 4.0) should be included for each asset.
        \item For scraped data from a particular source (e.g., website), the copyright and terms of service of that source should be provided.
        \item If assets are released, the license, copyright information, and terms of use in the package should be provided. For popular datasets, \url{paperswithcode.com/datasets} has curated licenses for some datasets. Their licensing guide can help determine the license of a dataset.
        \item For existing datasets that are re-packaged, both the original license and the license of the derived asset (if it has changed) should be provided.
        \item If this information is not available online, the authors are encouraged to reach out to the asset's creators.
    \end{itemize}

\item {\bf New assets}
    \item[] Question: Are new assets introduced in the paper well documented and is the documentation provided alongside the assets?
    \item[] Answer: \answerYes{} %
    \item[] Justification: The primary new assets introduced are our codebase and merging scripts, which are thoroughly documented and provided once upon acceptance.
    \item[] Guidelines:
    \begin{itemize}
        \item The answer \answerNA{} means that the paper does not release new assets.
        \item Researchers should communicate the details of the dataset\slash code\slash model as part of their submissions via structured templates. This includes details about training, license, limitations, etc. 
        \item The paper should discuss whether and how consent was obtained from people whose asset is used.
        \item At submission time, remember to anonymize your assets (if applicable). You can either create an anonymized URL or include an anonymized zip file.
    \end{itemize}

\item {\bf Crowdsourcing and research with human subjects}
    \item[] Question: For crowdsourcing experiments and research with human subjects, does the paper include the full text of instructions given to participants and screenshots, if applicable, as well as details about compensation (if any)? 
    \item[] Answer: \answerNA{} %
    \item[] Justification: Our research relies purely on automated model outputs and existing datasets; it does not involve any crowdsourcing or human subjects.
    \item[] Guidelines:
    \begin{itemize}
        \item The answer \answerNA{} means that the paper does not involve crowdsourcing nor research with human subjects.
        \item Including this information in the supplemental material is fine, but if the main contribution of the paper involves human subjects, then as much detail as possible should be included in the main paper. 
        \item According to the NeurIPS Code of Ethics, workers involved in data collection, curation, or other labor should be paid at least the minimum wage in the country of the data collector. 
    \end{itemize}

\item {\bf Institutional review board (IRB) approvals or equivalent for research with human subjects}
    \item[] Question: Does the paper describe potential risks incurred by study participants, whether such risks were disclosed to the subjects, and whether Institutional Review Board (IRB) approvals (or an equivalent approval/review based on the requirements of your country or institution) were obtained?
    \item[] Answer: \answerNA{} %
    \item[] Justification: Not applicable as this research does not involve human subjects.
    \item[] Guidelines:
    \begin{itemize}
        \item The answer \answerNA{} means that the paper does not involve crowdsourcing nor research with human subjects.
        \item Depending on the country in which research is conducted, IRB approval (or equivalent) may be required for any human subjects research. If you obtained IRB approval, you should clearly state this in the paper. 
        \item We recognize that the procedures for this may vary significantly between institutions and locations, and we expect authors to adhere to the NeurIPS Code of Ethics and the guidelines for their institution. 
        \item For initial submissions, do not include any information that would break anonymity (if applicable), such as the institution conducting the review.
    \end{itemize}

\item {\bf Declaration of LLM usage}
    \item[] Question: Does the paper describe the usage of LLMs if it is an important, original, or non-standard component of the core methods in this research? Note that if the LLM is used only for writing, editing, or formatting purposes and does \emph{not} impact the core methodology, scientific rigor, or originality of the research, declaration is not required.
    \item[] Answer: \answerYes{} %
    \item[] Justification: The generation of weak-to-strong preference trajectories using existing LLMs (e.g., Qwen3-4B over 1.7B) is explicitly detailed in Section 3, as it forms the basis of our Preference Delta Aggregation framework.
    \item[] Guidelines:
    \begin{itemize}
        \item The answer \answerNA{} means that the core method development in this research does not involve LLMs as any important, original, or non-standard components.
        \item Please refer to our LLM policy in the NeurIPS handbook for what should or should not be described.
    \end{itemize}

\end{enumerate}

\end{document}